\theoremstyle{plain}
\newtheorem{theorem}{Theorem}[section]
\theoremstyle{definition}
\theoremstyle{remark}
\def\eqref#1{equation~\ref{#1}}
\def\1{\bm{1}}
\def\rvu{{\mathbf{i}}}
\def\rvs{{\mathbf{s}}}
\def\rvu{{\mathbf{u}}}
\def\rvx{{\mathbf{x}}}
\def\rvy{{\mathbf{y}}}
\def\rmI{{\mathbf{I}}}
\def\ve{{\bm{e}}}
\def\vu{{\bm{u}}}
\def\evh{{h}}
\def\mA{{\bm{A}}}
\def\mC{{\bm{C}}}
\def\mE{{\bm{E}}}
\def\mK{{\bm{K}}}
\def\mM{{\bm{M}}}
\def\mP{{\bm{P}}}
\def\mQ{{\bm{Q}}}
\def\mR{{\bm{R}}}
\def\mU{{\bm{U}}}
\def\mV{{\bm{V}}}
\def\mW{{\bm{W}}}
\DeclareMathAlphabet{\mathsfit}{\encodingdefault}{\sfdefault}{m}{sl}
\SetMathAlphabet{\mathsfit}{bold}{\encodingdefault}{\sfdefault}{bx}{n}
\def\gA{{\mathcal{A}}}
\def\gG{{\mathcal{G}}}
\def\gN{{\mathcal{N}}}
\def\gO{{\mathcal{O}}}
\def\gX{{\mathcal{X}}}
\def\sR{{\mathbb{R}}}
\newcommand{\R}{\mathbb{R}}
\newcommand{\softmax}{\mathrm{softmax}}
\DeclareMathOperator*{\argmax}{arg\,max}
\def\emb{{\ve}}
\def\mKsparse{\Bar{\mK}}
\def\mVsparse{\Bar{\mV}}
\def\mWsparse{\Bar{\mW}}
\def\nnNorm{\text{norm}}
\def\nnSilu{\text{SiLU}}
\def\nnLinear{\text{linear}}
\def\nnConv{\text{conv2d}}
\def\nnDropout{\text{dropout}}
\definecolor{attnmask}{rgb}{0.8, 0.38, 0.38}
\definecolor{nodeemb}{rgb}{0.74, 0.49, 0.8}
\definecolor{observed}{rgb}{0.47, 0.8, 0.42}
\definecolor{onehot}{rgb}{0.42, 0.63, 0.8}
\definecolor{intermediate}{rgb}{0.75, 0.8, 0}
\begin{document}

\twocolumn[
\icmltitle{Graphically Structured Diffusion Models}

\begin{icmlauthorlist}
\icmlauthor{Christian Weilbach}{comp}
\icmlauthor{William Harvey}{comp}
\icmlauthor{Frank Wood}{comp}
\end{icmlauthorlist}

\icmlaffiliation{comp}{Department of Computer Science, University of British Columbia, Vancouver, Canada}

\icmlcorrespondingauthor{Christian Weilbach}{weilbach@cs.ubc.ca}

\icmlkeywords{Machine Learning, ICML}

\vskip 0.3in
]

\printAffiliationsAndNotice{}

\begin{abstract}
We introduce a framework for automatically defining and learning deep generative models with problem-specific structure. We tackle problem domains that are more traditionally solved by algorithms such as sorting, constraint satisfaction for Sudoku, and matrix factorization. Concretely, we train diffusion models with an architecture tailored to the problem specification. This problem specification should contain a graphical model describing relationships between variables, and often benefits from explicit representation of subcomputations. Permutation invariances can also be exploited. Across a diverse set of experiments we improve the scaling relationship between problem dimension and our model's performance, in terms of both training time and final accuracy. Our code can be found at \url{https://github.com/plai-group/gsdm}.
\end{abstract}

\section{Introduction}\label{sec:introduction}


A future in which algorithm development is fully transformed from a  challenging and labour intensive task \citep{cormenIntroductionAlgorithms2009, marslandMachineLearningAlgorithmic2009, stuartrussellArtificialIntelligenceModern2010, williamsonDesignApproximationAlgorithms2011a} into a fully automatable process is seemingly close at hand.  With prompt engineering large language models like GPT~\citep{brownLanguageModelsAre2020} and now ChatGPT have been shown to be capable of code completion and even full algorithm development from natural language task descriptions~\citep{chenEvaluatingLargeLanguage2021,ouyang2022training}.   

At the same time, significant advances in generative deep learning \citep{hoDenoisingDiffusionProbabilistic2020a, songScoreBasedGenerativeModeling2021, hoVideoDiffusionModels2022}, AutoML \citep{hutterAutomaticMachineLearning2018}, and few-shot learning \citep{brownLanguageModelsAre2020} have made it possible to learn, from data, flexible input-output mappings that generalize from ever smaller amounts of data.  This approach has spawned modern aphorisms from \citet{karpathy_2017} like ``
Gradient descent can write better software than you.  Sorry!'',
appropriate attempts, in our opinion, to re-brand deep learning as differentiable programming \citep{baydinAutomaticDifferentiationMachine2017}, and arguably even a new industry called ``Software 2.0'' in which one ``programs by example''~\citep{karpathy_2017}.

There, however, remains a chasm between these two approaches, roughly delineated along the symbolic vs.~connectionist divide.  Symbolically expressed algorithms can and often do generalize perfectly across all inputs and exhibit runtimes that are typically input ``size'' dependent.  Software 2.0 algorithms struggle to generalize outside of their training data thus are most often deployed in settings where copious training data is available, the so-called ``big-data'' regime.  Most such ``neural-network algorithms'' have runtimes that are not size dependent and resultingly cannot generalize in the same fashion as symbolically expressed algorithms.

Efforts to bring these two approaches closer together \citep{chaudhuriNeurosymbolicProgramming2021} often get lumped together under the moniker ``neuro-symbolic'' methods.  The general shape of these methods, so to speak, is to impose some aspect of symbolic reasoning on either the structure or computation performed by a connectionist architecture.  Our work can be seen as a significantly novel methodological contribution to this body of work.

We contribute a generic specification of methodology for advantageously imposing task specific symbolic structure into diffusion models and use it to demonstrate algorithm learning from data in several small-scale but foundational tasks across the algorithmic complexity spectrum. Specifically, our approach consumes a graphical model ``sketch'' that putatively could describe the joint data generative process. This sketch consists only of nodes for variables, edges between them, and optionally permutation invariances. We combine this information with an otherwise generic diffusion process~\citep{hoDenoisingDiffusionProbabilistic2020a}, using the edges to advantageously constrain the transformer attention mechanisms~\citep{vaswaniAttentionAllYou2017a} and permutation invariances to determine when parameters within our architecture can be shared. Compared to our neural baselines we improve the scaling of computational cost with problem dimension in most cases, and the scaling of problem performance with dimension in all cases.

\begin{figure*}[ht]
    \centering
    \includegraphics[width=0.95\textwidth]{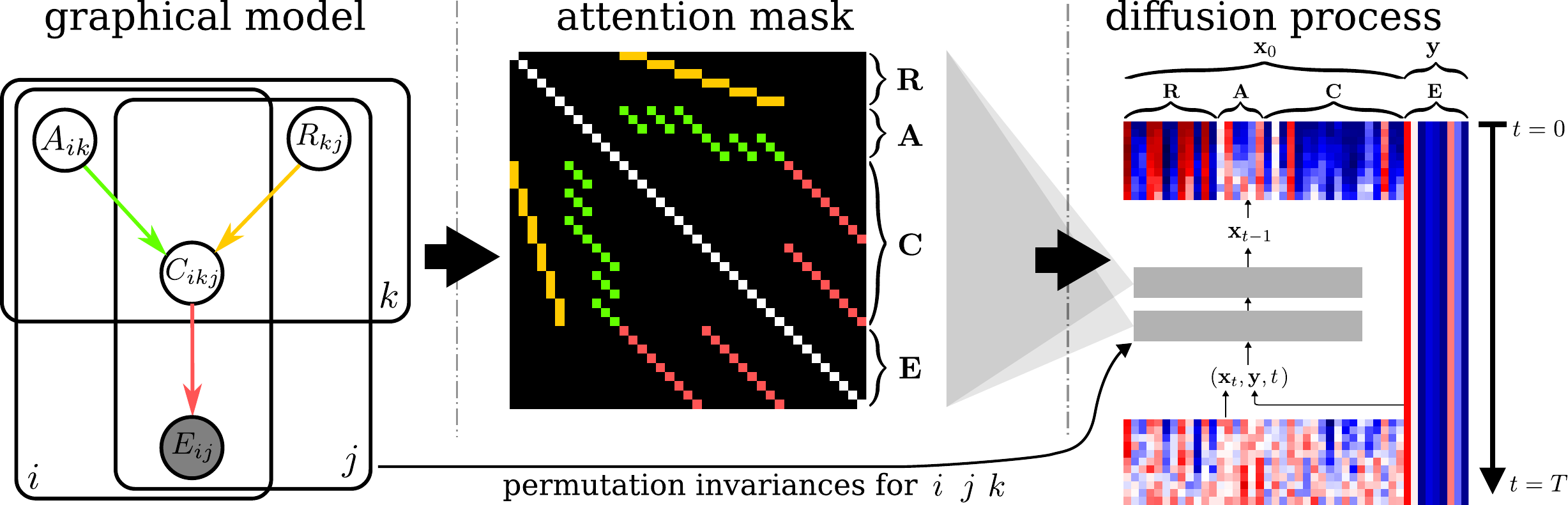}
    \caption{An application of our framework to binary-continuous matrix factorization. 
    In the first panel the computational graph of the multiplication of the continuous matrix $\mA\in\sR^{3\times2}$ and the binary matrix $\mR\in\sR^{2\times3}$ is expanded as a probabilistic graphical model in which intermediate products $\mC$ are summed to give $\mE=\mA\mR$. This graph is used to create a structured attention mask $\mM$, in which we highlight 1's with the color of the corresponding graphical model (or white for self-edges). 
    In the third panel the projection into the sparsely-structured neural network guiding the diffusion process is illustrated. The bottom shows the translation of permutation invariances of the probability distribution into shared embeddings, as detailed in \cref{sec:permutation-invariance}.}
    \label{fig:bmf_translation}
\end{figure*}

As a running example to keep in mind throughout the paper, consider being given the task of developing a novel matrix factorization algorithm, one which takes a real non-negative valued matrix as input and outputs a distribution over two matrix factors, one constrained to be binary valued, the other constrained to be non-negative.  The most traditional approach is to painstakingly hand-develop through intellectual willpower some new algorithm like Gram-Schmidt which may not exist and might take an entire career to develop.  A more modern approach, to which we compare, is to symbolically specify a model describing a joint data generating process and employ a generic inference algorithm like MCMC~\cite{wingateLightweightImplementationsProbabilistic}. Such a model is usually much easier to specify but the resulting ``inversion algorithm,'' running a generic inference algorithm at test time, trades sure generalization with worst-case infinite runtime.   Alternatively one could  generate a large training dataset from such a generative description, then hand-architect and train a deep neural network to learn the desired inversion algorithm, software 2.0 style \citep{le2017using}.  This is slow to develop and train, usually requiring architectural innovation, but constant-time fast at test time, albeit with likely poor algorithm-style generalization.  Our approach, most like  that of \citep{weilbachStructuredConditionalContinuous2020}, strikes a middle ground.   We adopt the  software 2.0 approach but provide a generic recipe for specializing a generic and powerful diffusion-based network architecture that trains quickly, generalizes reliably, and whose runtime scales with problem size.

\section{Background}\label{sec:background}

\subsection{Conditional Diffusion Models}\label{sec:dms}
Defining $\rvx_0$ to be data sampled from a data distribution $q(\rvx_0)$, a diffusion process constructs a chain $\rvx_{0:T}$ with noise added at each stage by the transition distribution
\begin{equation}\label{eq:diffusion-forward}
    q(\rvx_t | \rvx_{t-1}) = \gN(\rvx_t ; \sqrt{1-\beta_t}  \rvx_{t-1} , \beta_t \rmI )
\end{equation}
leading to the joint distribution
\begin{equation}\label{eq:diffusion-transition}
    q(\rvx_{0:T}) = q(\rvx_0) \prod_{t=1}^T q(\rvx_t | \rvx_{t-1}).
\end{equation}
The sequence $\beta_{1:T}$ controls the amount of noise added at each step and, along with $T$ itself, is chosen to be large enough that the marginal $q(\rvx_T|\rvx_0)$ resulting from \cref{eq:diffusion-forward} is approximately a unit Gaussian for any $\rvx_0$.

This diffusion process inspires a diffusion model~\citep{sohl-dicksteinDeepUnsupervisedLearning2015,hoDenoisingDiffusionProbabilistic2020a,songScoreBasedGenerativeModeling2021}, or DM, which approximately ``inverts'' it using a neural network that outputs $p_\theta(\rvx_{t-1}|\rvx_t) \approx q(\rvx_{t-1}|\rvx_t)$. We can sample from a diffusion model by first sampling $\rvx_T\sim p(\rvx_T)=\gN(\mathbf{0},\mathbf{I})$ and then sampling $\rvx_{t-1} \sim p_\theta(\cdot|\rvx_t)$ for each $t=T,\ldots,1$, eventually sampling $\rvx_0$. In the conditional DM variant~\citep{tashiro2021csdi} the neural network is additionally conditioned on information $\rvy$ so that the modelled distribution is
\begin{equation}\label{eq:diffusion-reverse}
    p_\theta(\rvx_{0:T}|\rvy) = p(\rvx_T) \prod_{i=1}^T p_\theta(\rvx_{t-1}|\rvx_t, \rvy).
\end{equation}
The transitions $p_\theta(\rvx_{t-1}|\rvx_t, \rvy)$ are typically approximated by a Gaussian with non-learned diagonal covariance, and so the learning problem is simply to fit the Gaussian's mean. \citet{hoDenoisingDiffusionProbabilistic2020a} parameterize this mean as an affine function of $\mathbb{E}[\rvx_0|\rvx_t,\rvy]$ and, by doing so, reduce the problem to fitting an estimator of $\rvx_0$ from $\rvx_t$ and $\rvy$ with the loss 
\begin{equation}\label{eq:diffusion-loss}
    \mathcal{L}(\theta) = \sum_{t=1}^T \mathbb{E}_{q(\rvx_0, \rvx_t, \rvy)} \left[ \lambda(t) \cdot \lVert \hat{\rvx}_\theta(\rvx_t, \rvy, t) - \rvx_0 \rVert_2^2 \right].
\end{equation}
\citet{hoDenoisingDiffusionProbabilistic2020a,songMaximumLikelihoodTraining2021a} show that there exists a weighting function $\lambda(t)$ such that this loss is (the negative of) a lower-bound on the marginal log-likelihood $\log p_\theta(\rvx_0|\rvy)$. We instead use uniform weights $\lambda(t) = 1$ which has been shown to give better results in practice~\citep{hoDenoisingDiffusionProbabilistic2020a}.

\subsection{Transformer Architecture}
\label{sec:transformer}

\cref{fig:attention_circuit} outlines our neural architecture for $\hat{\rvx}_\theta$, which is run once for every diffusion time step $t$. Its inputs are $\rvx_t$, $\rvy$, and the diffusion timestep $t$. The (noisy) value of each latent graphical model node is represented in $\rvx_t$ and, similarly, $\rvy$ contains the value of each observed graphical model node. We use linear projections to embed all of these values, concatenating the embeddings of all latent and observed nodes to obtain, for an $n$-node graphical model and $d$-dimensional embedding space, an $n \times d$ array of embedding ``tokens''.
We add learned ``node embeddings'' to these tokens to identify which graphical model node each corresponds to, and also add learned observation embedding vectors for tokens corresponding to observed nodes.
The resulting $n \times d$ array is passed through a stack of self-attention~\citep{vaswaniAttentionAllYou2017a} and ResNet~\citep{heDeepResidualLearning2016} blocks, as summarized in Figure 2, with the ResNet blocks taking an embedding of the diffusion timestep $t$ as an additional input. All of the timestep embedder, ResNet blocks, and self-attention modules are identical to those of \citet{songDenoisingDiffusionImplicit2021}, except that we replace convolutions with per-token linear projections.
The tokens corresponding to non-observed graphical model nodes are then fed through a learned linear projection to produce an output for each.

The self-attention layers are solely responsible for controlling interactions between embeddings, and therefore correlations between variables in the modelled distribution $p_\theta(\rvx_0|\rvy)$. 
Inside the self-attention, each embedding is projected into a query vector, a key vector, and a value vector, all in $\mathbb{R}^d$. Stacking these values for all embeddings yields the matrices $\mQ,\mK,\mV \in \mathbb{R}^{n \times d}$ (given a $n$-node graphical model). The output of the self-attention is calculated as
\begin{align}
\label{eq:attention}
    \emb^\text{out} &= \emb^\text{in} + \mW \mV = \emb^\text{in} + \softmax\left( \mQ \mK^{T} \right) \mV
\end{align}
where the addition of the self-attention layer's input $\emb^\text{in}\in\mathbb{R}^{n \times d}$ corresponds to a residual connection. We note that $\mQ \mK^{T}$ yields a pairwise interaction matrix which lets us impose an additional attention mask $\mM$ before calculating the output $\emb^\text{out} = \emb^\text{in} + \softmax\left( \mM \odot \mQ \mK^{T} \right) \mV$. This masking interface is central to structure the flow of information between graphical model nodes in \Cref{sec:structured-attention} .

\subsection{Graphical Models} \label{sec:graphical-model}
GSDM leverages problem structure described in the form of a graphical model. There is considerable flexibility in the specification of this graphical structure and we allow for both directed and undirected graphical models. A directed graphical model describes a joint distribution over ${\mathbf{x}=[x_1,\ldots,x_n]}$ with the density ${p(\mathbf{x})=\prod_{i=1}^n p_i(x_i|parents(x_i))}$. This may be natural to use if the problem of interest can be described by a causal model. This is the case in the BCMF example in \cref{fig:bmf_translation}, where the forward model is a matrix multiplication and we can use the matrix multiplication's compute graph as a graphical model. If the data is simulated and source code is available then we can automatically extract the simulator's compute graph as a graphical model (\Cref{app:bmf_compilation}). Alternatively, an undirected graphical model uses the density ${p(\mathbf{x}) \propto \prod_{j=1}^m f_j(vertices(j))}$ where $vertices(j)$ are the vertices connected to factor $j$ and $f_j$ maps their values to a scalar. This
is a natural formulation if the problem is defined by constraints on groups of nodes, e.g. for Sudoku with row, column and block constraints (\cref{app:attention_masks}). Finally, the graphical model can combine directed and undirected components, using a density ${p(\mathbf{x}) \propto \prod_{i=1}^n p(x_i|parents(x_i)) \prod_{j=1}^m f_j(vertices(j))}$. We use this in our graphical model for sorting (\cref{app:attention_masks}), which combines a causal forward model with constraints.

We emphasise that GSDM does not need the link functions (i.e. the form of each $p_i$ and $f_j$) to be specified as long as data is available, which is desirable as they are often intractable or Dirac in practice.
Also, while the selection of a graphical model for data can be subjective, we find in \cref{sec:model-structure-ablations} that GSDM is not sensitive to small changes in the specification of the graphical model and that there can be multiple modeling perspectives yielding similar GSDM performance. In general, we use the most intuitive graphical model that we can come up with for each problem whether it is directed, undirected, or a combination.

\subsection{Permutation Invariance} \label{sec:permutation-invariance}
Large probabilistic models often contain permutation invariance, in the sense that the joint probability density $q(\rvx_0)$ is invariant to permutations of certain indices~\citep{bloem-reddyProbabilisticSymmetriesInvariant2020}. For example the matrix multiplication in \cref{fig:bmf_translation} is invariant with respect to permutations of any of the plate indices.\footnote{In general, plate notation implies permutation invariance as long as no link functions depend on the plate indices themselves.} If the joint probability density is invariant to a particular permutation, this can be enforced in a distribution modelled by a DM by making the neural network architecture equivariant to the same permutation~\citep{hoogeboom2022equivariant}. We show how to encode such equivariances in GSDM in \cref{sec:varying_dimensions}.

\begin{figure}[t]
    \centering
    \includegraphics[width=0.9\columnwidth]{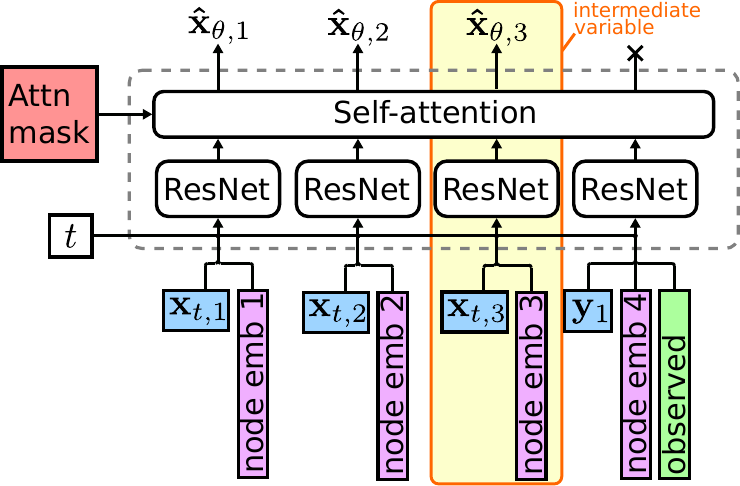}
    \caption{An example GSDM architecture for a graphical model with one observed and three latent variables. The components within the dashed lines are repeated multiple times. Arrows represent information flow. For clarity, we leave out simple operations including linear transformations; see the appendix for full detail.}
    \label{fig:attention_circuit}
\end{figure}
\section{Method}
\label{sec:methods}

The first stage in using GSDM is to define a graphical model as discussed previously.
This section focuses on how to map from a graphical model to the corresponding GSDM architecture, an example of which is shown in \cref{fig:attention_circuit}. The backbone of the architecture is a stack of transformer modules operating on a set of embeddings, with each embedding corresponding to one graphical model node. The colors in \cref{fig:attention_circuit} outline this section.
\textcolor{attnmask}{\Cref{sec:structured-attention} describes how we derive a sparse attention mechanism from a graphical model.}
\textcolor{nodeemb}{\Cref{sec:node-embeddings} explains the node embeddings.}
\textcolor{intermediate}{\Cref{sec:intermediate_variables} motivates our decision to model ``intermediate'' variables jointly with the variables of interest.}
\textcolor{observed}{\Cref{sec:flexible_conditioning} describes how we handle observed variables}, and \textcolor{onehot}{\cref{sec:mixed_continous_discrete_variables} describes how we handle discrete variables.}

\subsection{Faithful Structured Attention} \label{sec:structured-attention}
Our architecture in \cref{fig:attention_circuit} runs a self-attention mechanism over a set of embeddings, each of which corresponds to a graphical model node. To add structural information, we use the graphical model's edges to construct attention masks $\mM$ for the self-attention layers. Precisely, we allow variable $i$ to attend to variable $j$ if there is an edge between node $i$ and node $j$, and irrespective of the direction of the edge. If the graphical model contains factors, we additionally allow attention between any node pairs $(i,j)$ which connect to the same factor. These heuristics typically keep the graph sparse while, importantly, ensuring that given enough attention layers all output nodes can depend on all input nodes. We show in \cref{app:faithfulness} that this is necessary to faithfully capture the dependencies in $q(\rvx|\rvy)$.

To reduce our memory usage and computation compared to a dense matrix multiplication of the masked matrix we provide an efficient sparse attention implementation as described in \Cref{app:sparse_attention} and the released code. Its computational cost is $\gO(nm)$, where $n$ is the number of dimensions and $m$ is the number of entries in the densest row of $\mM$. We show in \cref{ap:complexities} that, even after accounting for cost of the modeling of additional intermediate variables as described later, the sparse attention mechanism gives GSDM a reduction in computational complexity of $\mathcal{O}(n)$ relative to a naive baseline in three of our four experiments.

\subsection{Node Embeddings} \label{sec:node-embeddings}
\label{sec:varying_dimensions}  

GSDM's architecture contains positional embeddings to let the neural network distinguish which inputs correspond to which graphical model nodes. The simplest variation of GSDM learns one embedding per graphical model node independently, and we call this approach \textbf{independent embeddings}, or \textbf{IE}. An issue with IE is that it cannot generally be adapted to changing problem dimension.
A generic solution to this involves noting that graphical model nodes can often be grouped together into ``arrays''. For instance, the BCMF example in \cref{fig:bmf_translation} contains 39 nodes but these belong to just 4 multi-dimensional arrays: $\mA$, $\mR$, $\mC$, and $\mE$. We suggest \textbf{array embeddings}, or \textbf{AE}, which can be automatically constructed for such problems with (potentially variable-size) ordered datatypes. With AE, we compute the embedding for each node as the sum of a shared array embedding, learned independently for every array, and a sinusoidal positional embedding~\citep{vaswaniAttentionAllYou2017a} for its position within an array. Scalars can be treated as arrays of size 1. 
AEs work well in our experiments and are a sensible default.

For graphical models exhibiting permutation invariances we can optionally enforce these invariances exactly using \textbf{exchangeable embeddings}, or \textbf{EE}. We do so according to the following result.
\begin{theorem}[Permutation invariance in GSDM]
\label{theo:perm-invariance-gsdm}
Let $\mathcal{A}$ represent the indices of a subset of the dimensions of data $\rvx$ and $\Pi_\mathcal{A}$ be the class of permutations that permute only dimensions indexed by $\mathcal{A}$. Assume we have a GSDM parameterised with neural network $\hat{\rvx}_\theta(\cdot; \mM)$, where $\mM$ is the structured attention mask. If the node embeddings used by $\hat{\rvx}_\theta$ are shared across all nodes indexed by $\mathcal{A}$, then the distribution modelled by GSDM will be invariant to all permutations $\pi$ satisfying
\begin{equation}
    \mM = \pi \mM \quad \text{and} \quad \pi \in \Pi_\mathcal{A}
\end{equation}
where $\pi\mM$ is a permutation of both the rows and columns of $\mM$ by $\pi$.
\end{theorem}
\begin{proof}
See \cref{app:permutation-invariance}.
\end{proof}

One implication of the $\mM=\pi\mM$ condition is that it holds trivially for a DM without sparse attention, in which $\mM$ is a matrix of all ones. The modeled distribution would therefore be invariant to any permutation of $\mathcal{A}$~\cite{hoogeboom2022equivariant}. This may be a useful permutation invariance to encode for some problems but, for the structured problems considered in this paper, it is too simple and not valid. In none of our experiments are there two variables whose values can be swapped without changing the density under the data distribution. For example in BCMF, the data density is invariant to reordering any of the plate indices, but not to swapping any single pair of nodes in them.

When $\mM$ is a structured matrix as we propose for GSDM, \cref{theo:perm-invariance-gsdm} suggests a way to incorporate invariances that are closely tied to the problem structure. On the BCMF problem shown in \cref{fig:bmf_translation} we use only four embeddings, sharing a single embedding between all nodes in $\mA$; another between all nodes in $\mR$; and so on for $\mC$ and $\mE$. Without imposing an attention mask, this would make the network invariant to any permutation of the variables within each of $\mA$, $\mR$, $\mC$ and $\mE$. With GSDM's attention mask, it is only invariant to permutations which lead to the same mask. This means that the learned distribution is invariant only to the ordering of the indices $i$, $j$, and $k$. As represented by the plate notation in \cref{fig:bmf_translation}, this is a desired invariance that matches the data distribution.

In general, \cref{theo:perm-invariance-gsdm} suggests a simple heuristic for checking when problem invariances can be enforced. If permuting the ordering of an index does not affect the sparsity mask for a given problem then sharing embeddings across instances of this index will enforce a permutation invariance with respect to this index in GSDM. We utilise this property in three of our four experiments. Along with our compiler which generates an attention mask for a programatically-defined graphical model, checking when \cref{theo:perm-invariance-gsdm} holds for a given class of permutations can reasonably be automated. We do, however, still require human knowledge to propose invariances suitable for the graphical model.

\subsection{Intermediate Variables}
\label{sec:intermediate_variables}
\begin{figure}[t]
    \centering
    \includegraphics[width=\columnwidth]{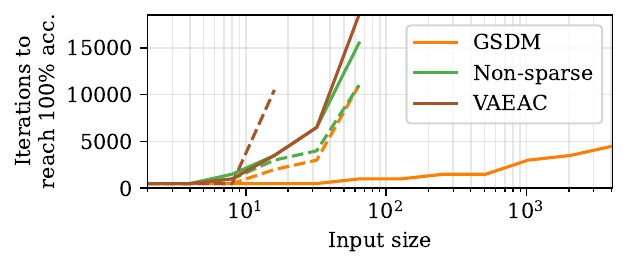}
    \caption{Time to fit the Boolean circuit with (solid lines) and without (dashed) intermediate variables. Accuracy is computed on 16 validation examples every 500 iterations, to a maximum of $20\,000$.}
    \vspace{-2mm}
    \label{fig:boolean-intermediate-vars-demonstration}
\end{figure}

When translating a generative model into a graphical model, any observed variables or latent variables of particular interest to the modeler should clearly be included as nodes. There will be other latent variables, which we call ``intermediate variables'', which are not directly of interest but may be included to make the graphical model more interpretable, more sparse, or otherwise preferable. Whether or not these are included has implications for GSDM as it will either be trained to model them jointly with the variables of interest if they are included, or trained without this signal if they are not. There is no model-agnostic ``right'' answer to whether or not intermediate variables will be helpful but we point out that including them is often beneficial for GSDMs because of \textbf{(1)} GSDM's empirical success at utilising the learning signal from these intermediate variables as described below and \textbf{(2)} the reduced computational cost of GSDM's sparse attention that is related to the number of graphical model \textit{edges} more so than the number of nodes, and so is not necessarily increased by adding intermediate variables.

As an illustrative example, consider a Boolean logic circuit which takes an input of size $2^n$. The input is split into pairs and each pair is mapped through a logic gate to give an output of size $2^{n-1}$. After $n$ layers and a total of $2^n-1$ logic gates, there is a single output. Suppose that you know that each gate is randomly assigned to be either an OR gate or an AND gate, and you wish to infer which from data. If the data contains only the inputs and the single output, it contains only 1 bit of information. Identifying the function computed by each of the $\gO(2^n)$ gates will therefore require at least $\gO(2^n)$ data points. On the other hand, if the data contains intermediate variables in the form of the output of every logic gate, each data point contains $\gO(2^n)$ bits of information so the task may be solvable with only a few data points.
\Cref{fig:boolean-intermediate-vars-demonstration} shows that this reasoning holds up when we train a DM on this example. Without intermediate variables, the number of training iterations needed scales exponentially with $n$. With the combination of intermediate variables and structured attention, however, the training behaviour is fundamentally changed to scale more gracefully with $n$.

\subsection{Flexible Conditioning}
\label{sec:flexible_conditioning}

Optimizing the DM loss in \cref{eq:diffusion-loss} requires a partitioning of data into latent variables (outputs) $\rvx_0$, and observed variables (inputs) $\rvy$. Our neural network distinguishes between variables in $\rvx_t$ and $\rvy$ via a learned observation embedding vector $\emb_o$ that is added to the embeddings of observed variables. This approach also naturally allows us to deal with missing values, or additional observed values, at inference time. This is unlike standard algorithms or more traditional autoregressive amortized inference artifacts~\citep{leInferenceCompilationUniversal2017}.

\subsection{Handling Mixed Continuous/Discrete Variables} \label{sec:mixed_continous_discrete_variables}

Our simple approach to combining discrete and continuous variables in a DM is to map the discrete variables to one-hot encodings in a continuous space before running the diffusion process. Sampled one-hot encodings can then be mapped back to the discrete space with an $\argmax$. We project the entire (diffused) one-hot encoding into a single embedding before passing it into the transformer, so that the transformer performs the same amount of computation for a discrete variable as for a continuous variable.

\begin{figure}[t]
    \centering
    \includegraphics[width=0.8\columnwidth]{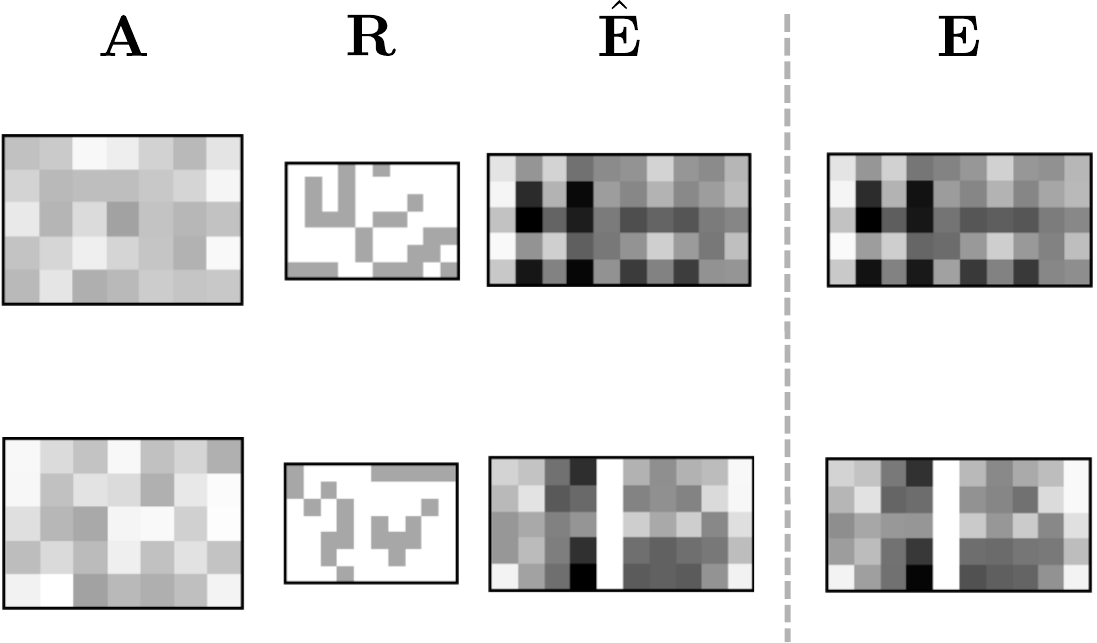}
    \caption{Two BCMF samples by GSDM with $m=5$, $n=10$ and $k=7$. Each row in the plot shows the inferred $\mA$, $\mR$ and the respective reconstruction $\hat{\mE}=\mA\mR$ on the left and the $\mE$ provided as input on the right. Intermediate variables $\mC$ are omitted here.}
    \label{fig:bmf_reconstruction}
\end{figure}

\section{Experiments}
\label{sec:experiments}

Our experiments compare GSDM against ablations including a non-sparse version (i.e. a vanilla DM), as well as the variational auto-encoder for arbitrary conditioning (VAEAC) \citep{ivanovVariationalAutoencoderArbitrary2019} and, where appropriate, the best performing MCMC method we tried: Lightweight Metropolis Hastings (LMH)~ \citep{wingateLightweightImplementationsProbabilistic}. Unlike GSDM, MCMC requires a fully specified probabilistic model and cannot operate with data and independencies only. We additionally compare against two deterministic baselines trained to predict $\rvx$ given $\rvy$ with a mean-squared error loss. One of these, ``Regressor + GS'' has a similar architecture to GSDM and the other, ``Regressor'' has a similar architecture to our vanilla DM baseline. Both perform poorly because most of our problems have multiple solutions. We test the methods on the following four problems, which were chosen to cover a wide range of problems in the space of algorithm design and demonstrate that GSDM is suitable to learn such approximate algorithms.

\paragraph{Binary continuous matrix factorization (BCMF)}
Our first experiment tackles the challenging BCMF problem, where we factorize a continuous matrix into one binary and one continuous component. Our BCMF generative model samples a binary matrix $\mR \in \R^{k \times n}$ elementwise from $\text{Bernoulli}(0.3)$ and a continuous matrix $\mA \in \R^{m \times k}$ from an elementwise $\text{Uniform}(0, 1)$ prior. The BCMF task is to infer these conditioned on $\mE := \mA \mR$. To obtain intermediate variables as discussed in \cref{sec:intermediate_variables} we break the matrix multiplication into two steps, $C_{ijk} := A_{ik}R_{kj}$ and then $E_{ij} := \sum_k C_{ijk}$. Our latent variables $\rvx_0$ are therefore the combination of all elements of $\mR$, $\mA$, and $\mC$ and the observed variables $\rvy$ are the elements of $\mE$. \cref{fig:bmf_reconstruction} shows that our learned GSDM can find factorisations that are consistent with the observations.
We plot performance for GSDM trained on different problem sizes in \cref{fig:scaling-results}. We highlight another property here, namely that GSDM trained on BCMF can generalise well to larger $m$ and $n$ than it is trained on; a GSDM trained on $m$, $n$ and $k$ uniformly sampled in the range $1$ to $10$ can generalize to problems with $m$ and $n$ as large as 200 as we show in \cref{app:conditional_bcmf}.

In keeping with our motivation of GSDM as a tool for quickly learning approximate algorithms, we compare against several quickly-implemented heuristic algorithms for BCMF. First, K-means uses the K-means algorithm to assign each row of $\mE$ to one of $k$ cluster. The factor $\mA$ is then the cluster centers and $\mR$ is a binary tensor describing the assigned cluster indices. Next, NMF uses an off-the-shelf continuous non-negative matrix factorization algorithm and discretizes one of the returned factors. Finally, we found that simply prompting ChatGPT to write a BCMF algorithm in Python yielded a non-trivial algorithm which we denote ``ChatGPT'' (see \cref{app:bcmf-baselines}). GSDM outperforms all of these baselines whenever $k$ is less than $n$ and $m$ (so that there are no trivial solutions). Additionally, we emphasize that GSDM targets the posterior over all solutions, while these hand-coded algorithms yield a single point estimate.

\begin{figure*}[!h]
    \includegraphics[scale=0.73]{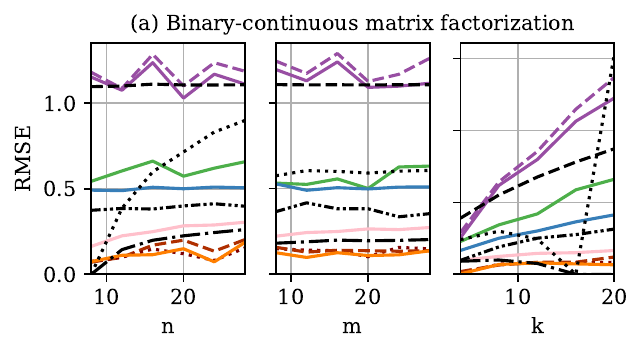}
    \includegraphics[scale=0.73]{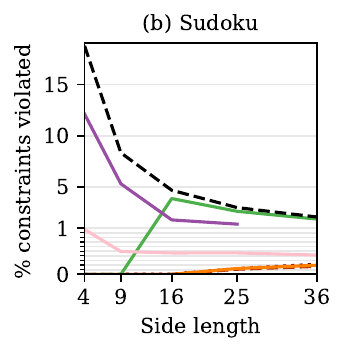}
    \includegraphics[scale=0.73]{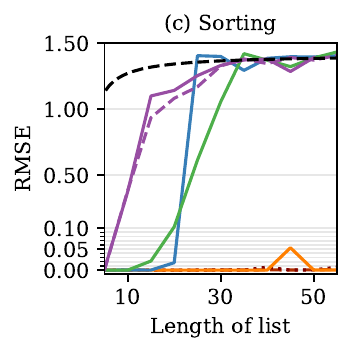}
\begin{subfigure}[b]{\textwidth}
\centering
    \includegraphics[scale=0.6]{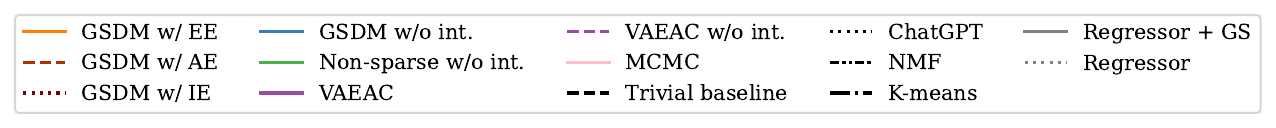}
\end{subfigure}
\caption{
Performance versus problem dimension with all runs matched for wall-clock time. \textbf{(a)} BCMF reconstruction error. We start from $n=m=16$ and $k=8$ and vary each dimension independently. GSDM is the best-performing method for all settings, and is also the only neural method to outperform our MCMC baseline. \textbf{(b)} Number of pairwise constraint violations from samples for the Sudoku task with 20\% of cells observed, normalised by the total number of constraints. Only GSDM and the MCMC baseline significantly outperform it across all tested problem dimensions. \textbf{(3)} RMSE between ground-truth sorted lists and the observed lists sorted with sampled permutation matrices. All methods work on lists of length 5 but our baselines quickly degrade in performance as the dimensionality is increased, while GSDM continues to work well. The dashed black line represents a trivial baseline given by, in (a), predicting the mean under the prior, in (b), sampling all cells from independent uniform distributions and, in (c), sampling a random permutation matrix.
We share the legend with \cref{fig:training_plots}, which contains results for ``Regressor'' and ``Regressor + GS''.
}
\label{fig:scaling-results}
\end{figure*}

\paragraph{Sudoku}
A Sudoku grid is a $9\times9$ array of numbers such that each number is in $\{1,\ldots,9\}$ and no two numbers in the same row, column, or $3\times3$ block are the same. Solving a Sudoku, i.e. completing one given a partially-filled in grid, is a difficult problem for deep learning methods, and has previously been addressed with hand-designed modules for reasoning~\citep{palmRecurrentRelationalNetworks2018} or semi-definite programming~\citep{wangSATNetBridgingDeep2019}. We use GSDM without such custom modules for combinatorial reasoning. We model a Sudoku with a factor graph. There is one factor for each  row, column, and block representing the constraint that it contains all numbers $\{1,\ldots,9\}$. The resulting GSDM attention mask lets each variable attend to all other variables in the same row, column, and block. Our data generator\footnote{We generated complete Sudokus with a port of \url{https://turtletoy.net/turtle/5098380d82}} creates complete $9\times9$ Sudokus. In order to train GSDM as a Sudoku solver for arbitrary Sudoku puzzles, we create each training example by randomly partitioning the grid into latent and observed portions by sampling $n_o$ uniformly from 0 to 80 and then uniformly sampling $n_o$ variables to observe. We find that GSDM can indeed solve Sudoku puzzles: it generates valid Sudokus unconditionally with $98\%$ accuracy; and in the case where 16 of 81 cells are observed with $96\%$ accuracy while maintaining sample diversity (\Cref{app:sudoku_sample_diversity}). We also consider a generalization of Sudokus to any $n^2 \times n^2$ grid. Our results in \cref{fig:scaling-results} show that GSDM can scale gracefully to these larger problems. This is not true of our other deep learning baselines, which have performance no better than random guessing on large problem sizes. The MCMC method consistently violates less constraints than random guessing but appears to be prone to getting stuck in local minima and so seldom finds a correct solution even with small problem sizes.

\paragraph{Sorting}
Our graphical model for sorting is as follows. (1)~Sample an unsorted list $\rvu \in \mR^n$. with each element $u_i$ sampled from a unit normal. (2)~Sample a permutation matrix $\mP \in \{0,1\}^{n \times n}$. Similarly to the Sudoku case, factors on each row and column enforce that there should be a single 1 in each. (3)~Multiply $\mP$ and $\rvu$. We integrate intermediate variables $C_{ij}:=P_{ij}u_j$ and sum them as $s_i:=\sum_j C_{ij}$ to yield $\rvs=\mP\rvu$. (4)~We use factors between each pair of elements in $\rvs$ to enforce that it is sorted. 
We show a diagram of this graphical model, as well as all others used in our experiments, in \cref{app:graphical_model_specification}.
This graphical model is different to, and simpler than, our true data generation procedure in which we obtain $\rvs$ and $\mP$ with a pre-existing sorting algorithm. It fits into the GSDM framework nonetheless since the graphical model is a valid specification of the independences in the data distribution. We measure it's performance in \cref{fig:scaling-results} as the RMSE between the ground-truth $\rvs$ and the observed $\rvu$ transformed by the sampled $\mP$, and plot progress during training in \cref{fig:sorting_different_graphical_models}. Our MCMC baseline was not able to sort even 3 element lists.
\paragraph{Boolean} We additionally use the Boolean circuit described in \cref{sec:intermediate_variables} (\cref{fig:boolean-intermediate-vars-demonstration}) to demonstrate GSDM's ability to learn structured functions over many variables. Our graphical model is simply the tree-shaped compute graph of the Boolean circuit (diagram in \Cref{app:graphical_model_specification}).

\subsection{Structured Attention and Intermediate Variables}
All of our experiments rely on structured attention for their good performance, and the positive effect remains to a lesser extent after removing intermediate variables. We saw this for the Boolean circuit in \cref{sec:intermediate_variables} and here show experiments in \cref{fig:scaling-results} for Sudoku, sorting, and BCMF of various dimensionalities. Imposing structure leads to significant improvements in each case, especially in combination with intermediate variables. We ablate intermediate variables in the same figure, as well as for the Boolean circuit in \cref{fig:boolean-intermediate-vars-demonstration}. The combination of structured attention and intermediate variables is extremely helpful in all cases, ensuring that the error remains low for all tested problem dimensions, even when the baselines perform no better than random chance. 

\begin{figure}
    \centering
    \includegraphics[scale=0.6]{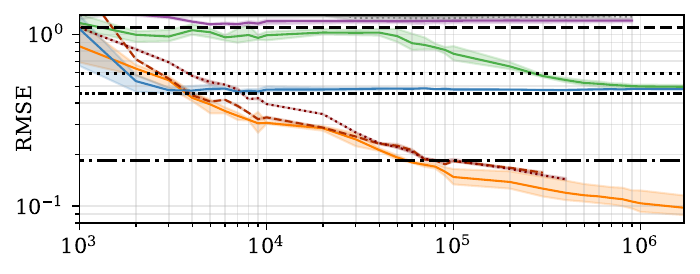}
    \includegraphics[scale=0.6]{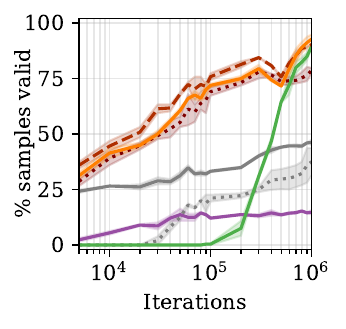}
    \includegraphics[scale=0.6]{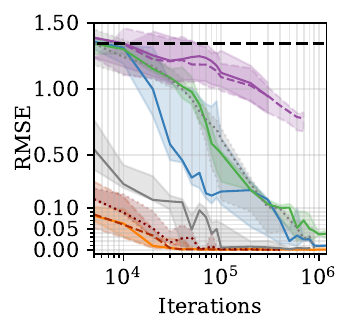}
    \caption{Training curves for BCMF (top) with $n,m,k=16,10,8$; $9\times9$ Sudoku (left); and sorting (right) with $n=20$. Error bars show min/max of 3 seeds. Legend in \cref{fig:scaling-results}.}
    \label{fig:training_plots}
\end{figure}

\subsection{Effect of Node Embeddings}
Three of our problems contain permutation invariances exploitable by embedding-sharing. Our sorting model is invariant to permutations of $j$ (the ordering of $\rvu$). Sudoku is invariant to various permutations although EE was only able to capture a simple invariance by sharing embeddings between nodes that are within both the same row and the same block. BCMF is invariant to permutations of all plate indices in \cref{fig:bmf_translation}. We see in \cref{fig:training_plots} that incorporating these invariances with EE always gives faster training than using IE. Even without specifying the invariances, AE can be used to obtain most of the benefit. For Sudoku AE outperforms IE, which may be due to AE's natural encoding of spatial information. Additionally, our results on generalization past the training dimensions in \cref{app:unconditional_bmf} are only possible with the embedding-sharing enabled by EE.

Also note that all results shown so far are for the infinite data regime, where we create data cheaply and on-the-fly during training. We show in \cref{app:data-efficiency} that our innovations are even more helpful when the number of training examples is restricted.

\subsection{Robustness to Choice of Graphical Model} \label{sec:model-structure-ablations}

There can be a degree of freedom in the choice of graphical model for a given problem. For instance, in sorting we represented the constraint that $\rvs$ is sorted with pairwise constraints between neighbouring elements. Another reasonable graphical model may have imposed a factor over all nodes of $\rvs$, making $\rvs$ fully-connected in our attention mask. We show in \cref{fig:sorting_different_graphical_models} that this choice makes negligible difference to GSDM's performance. Furthermore, some modeling choices make no difference at all to GSDM. For example we sampled $\rvu$ and $\mP$ first and then computed $\rvs:=\mP\rvu$, but someone else may have sampled $\rvs$  first (with factors to ensure it is sorted) and then $\mP$ before computing $\rvu:=\mP\rvs$. These two choices lead to identical GSDM networks because the only difference is the direction of edges in the graphical model (which is irrelevant when they are symmetrized to create the attention mask). \Cref{fig:sorting_different_graphical_models} also shows that GSDM can be robust to a misspecified model, ``Unconstrained $\rvs$'', where constraints are not imposed on $\rvs$. 
Two baselines perform significantly worse: a ``Random'' baseline, in which each node is allowed to attend to 20 other nodes sampled at random; and a baseline with a non-symmetrized version of our ``standard'' graphical model's attention mask. For sorting, edges added during symmetrization are necessary to allow any information to flow from the intermediate variables $\mC$ to the permutation matrix $\mP$. This demonstrates the necessity of our symmetrization, backing up the reasoning in \cref{sec:structured-attention}.

\begin{figure}[t]
    \centering
    \includegraphics[scale=0.7]{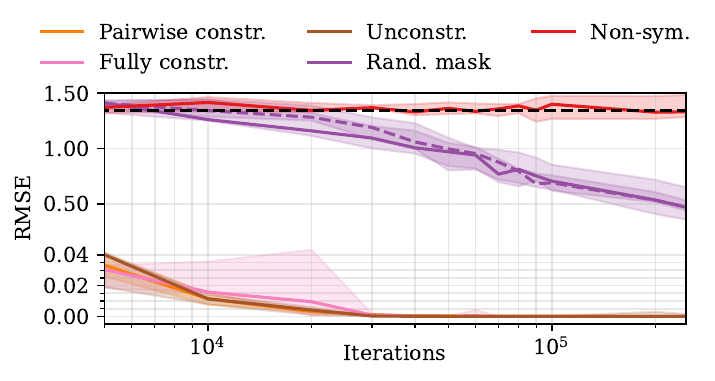}
    \caption{Ablations of GSDM with different graphical model structures for sorting with $n=20$. The first three lines, which represent reasonable or only slightly mis-specified constraints on $\rvs$ in different ways described in \cref{sec:model-structure-ablations}, quickly reach near-zero error. This speaks to GSDM's robustness to graphical model specification.
    }
    \label{fig:sorting_different_graphical_models}
    \vspace{-5mm}
\end{figure}

\section{Related Work}
\label{sec:related_work}

Sparse attention mechanisms have been introduced in several forms, either to save memory footprint 
\citep{daiTransformerXLAttentiveLanguage2019, kitaevReformerEfficientTransformer2020, royEfficientContentBasedSparse2020} or computational cost \citep{childGeneratingLongSequences2019, beltagyLongformerLongDocumentTransformer2020, zaheerBigBirdTransformers2021}. A recent review is provided in \cite{tayEfficientTransformersSurvey2022}. 

The framework of amortized inference \citep{gershmanAmortizedInferenceProbabilistic2014, ritchieDeepAmortizedInference2016} and probabilistic programming \citep{ leInferenceCompilationUniversal2017, vandemeentIntroductionProbabilisticProgramming2018} provides the foundation for our approach. But instead of requiring a full probabilistic model we relax the requirement to only specify the graphical model structure. \citet{weilbachStructuredConditionalContinuous2020} use a graphical model to structure a continuous normalizing flow for inference problems of fixed dimension. We use the more flexible and scalable DM framework including discrete variables. We can also work directly with the forward probabilistic model to avoid computing the potentially denser stochastic inverse of \citet{webbFaithfulInversionGenerative2017}.

Close in spirit to our work in terms of combinatorial optimization are \citet{selsamLearningSATSolver2019} for general SAT solving and \citet{tonshoffOneModelAny2022} for general CSP solving, which also encode the structure between variables and constraints as message passing neural networks. But these frameworks are only applicable to deterministic discrete problem classes, while we integrate everything in the more general probabilistic inference framework. \citet{freivalds2022denoising} use a DM with a graph neual network architecture~\cite{zhou2020graph} to tackle the SAT problem class. We aim to tackle a much more general class of problems. 

Our approach to explicitly training conditional diffusion models is based on that of \citet{tashiro2021csdi,harveyFlexibleDiffusionModeling2022}. Various other methods train unconditional diffusion models before providing approximate conditioning at test-time~\citep{songScoreBasedGenerativeModeling2021,hoVideoDiffusionModels2022}. Most DMs are defined over either purely continuous~\citep{hoDenoisingDiffusionProbabilistic2020a} or purely discrete spaces~\citep{austin2021structured,hoogeboom2021autoregressive}. Our approach to mixed-continuous DMs is similar to that of \citet{hoogeboom2022equivariant} but takes a variational-dequantization perspective~\citep{ho2019flow++} so that mapping back to the discrete space involves taking an $\argmax$ instead of requiring sampling.

\section{Discussion}
\label{sec:conclusion}

We have introduced GSDM which benefits from the generality of statistical conditioning, the expressivity of state-of-the-art diffusion models with attention mechanisms, and the structural reasoning applied in programming language theory and algorithm design.  We have demonstrated that GSDMs can automate the reasoning required to create approximate solutions to tasks as diverse as sorting, Sudoku solving and binary-continuous matrix factorization.  Our current implementation can be applied to graphical models with size up to roughly $50\,000$ nodes on a standard GPU and we see considerable scope for scaling further in future work. Another immediate avenue for future work to address is GSDM's relatively slow sampling speed. All samples drawn throughout this paper used 1000 network function evaluations, but recent work suggests that it may be possible to reduce this number considerably~\cite{salimans2022progressive,karras2022elucidating}. We believe that GSDM holds promise for models as complex as large scientific simulators and encourage others in the community to join us in making this a reality. In addition we are interested in integrating task-specific symbolic knowledge beyond graphical model structure into the dynamics of diffusion processes. Finally we believe that, if future work could turn our largely manual process for deriving a specific GSDM into a fully specified formal procedure, this could lead to powerful new next-generation probabilistic programming systems.

\subsubsection*{Acknowledgments}
Our thanks go to Benjamin Bloem-Reddy for a helpful discussion on equivariance in probabilistic models. 
We acknowledge the support of the Natural Sciences and Engineering Research
Council of Canada (NSERC), the Canada CIFAR AI Chairs Program, and the Intel
Parallel Computing Centers program. Additional support was provided by UBC's
Composites Research Network (CRN), and Data Science Institute (DSI). This
research was enabled in part by technical support and computational resources
provided by WestGrid (www.westgrid.ca), Compute Canada (www.computecanada.ca),
and Advanced Research Computing at the University of British Columbia
(arc.ubc.ca). WH acknowledges support by the University of British Columbia's
Four Year Doctoral Fellowship (4YF) program.

\bibliography{references}
\bibliographystyle{icml2023}

\newpage
\appendix
\onecolumn


\section{Experimental Details}
\label{app:training_parameters}

\begin{table*}[h]
    \centering
\begin{tabular}{l|c|c|c|c|c|c}
     Experiment  & Graphical model    & Conditioned on  & Struct. attn.   & Interm. vars & Disc. \& cont.    & Emb. sharing \\
     \hline
     BCMF & directed  & $\mE$     &    \checkmark    & \checkmark    & \checkmark   & (\checkmark) \\
     Sudoku & factor graph    & random subset     &    \checkmark  & -            & -           & (\checkmark) \\
     Sorting & mixed & $\vu$  &     \checkmark   & \checkmark & \checkmark  & (\checkmark) \\
     Boolean  & directed  & input        &   \checkmark & \checkmark          & -            & - 
\end{tabular}
\vspace{5pt}
        \caption{Problems tackled. A tick, \checkmark, highlights where our contributions were necessary to learn at all or scale with problem dimension and (\checkmark) where they improved performance. The improvements from structured attention and intermediate variables are shown in \cref{fig:boolean-intermediate-vars-demonstration} and \cref{fig:scaling-results}.}
    \label{tab:experiment_overview}
\end{table*}

\begin{table*}[h]
    \centering
    \caption{Experimental parameters. Listed training times refer to those used in \cref{fig:scaling-results,fig:boolean-intermediate-vars-demonstration}. The numbers of training iterations refer to those listed in the same plot with the listed problem dimension. They vary with problem dimensions as we trained all dimensions for a fixed training time on each problem, and the time per iteration depends on problem dimension. The training curves in \cref{fig:training_plots} were obtained by training for longer in some cases. The dimensions listed are those used in \cref{fig:training_plots}; dimensions are varied and clearly stated in other results. 
    }
    \begin{tabular}{llllllll}
        \toprule
        Parameter  &  Sorting  & Sudoku & BCMF & Boolean \\
        \midrule
        Problem dimension    & $n=20$     & $9\times9$  & $n,m,k=16,10,8$  & -  \\
        Training time   & 1 day  & 1 day  &  8 hours  & $40$-$160$ min. \\
        Training iters (1000s)  & $120$  & $-$   & $320$    & $20$ \\
        Batch size      & $16$         & $32$   & $8$    & $16$ \\
        Learning rate   & \num{2e-4}      & \num{2e-5} & \num{2e-5}    & \num{2e-5} \\
        Embedding dim.  & $64$            & $128$  & $64$   & $64$  \\
        \# transformer layers & $6$       & $6$     & $12$  & $12$  \\
        \# attention heads     & $8$      & $8$      & $2$   & $2$  \\
        GPU type        & A100            &  A5000  & A100         & A5000   \\
        VAEAC learning rate  &  $3\times10^{-5}$  &  $3\times10^{-4}$  &   $3\times10^{-5}$  &  $3\times10^{-5}$  \\ 
        LMH warmup samples & - & $5000$ & $5000$ & - 
    \end{tabular}
    \vspace{1em}
    \label{tab:hyperparmeter_table}
\end{table*}

We re-summarize our experiments and the results obtained in \cref{tab:experiment_overview}.
\Cref{tab:hyperparmeter_table} presents our experimental parameters. Our ablations on sorting and BCMF give all networks equal training time, and the number of iterations therefore varies depending on the time to run the network. We tuned the learning rates through small grid searches but this yielded only a small improvement to training. In keeping with common deep learning wisdom, we found that increasing the embedding dimension and number of transformer layers improved performance, as does using multiple attention heads. Conversely, the results degrade gracefully in smaller embedding dimensions, less transformer layers or varying numbers of attention heads. We set these architectural hyperparameters with the goal of obtaining networks that were both lightweight and trained quickly, and tuned them via some experimentation for sorting, Sudoku, and BCMF. We use NVIDIA A100 GPUs for sorting and BCMF, and smaller NVIDIA RTX A5000s for all ablations and other problems. We did not tune batch sizes, other than ensuring that they were large enough to obtain good GPU utilization and small enough to avoid out-of-memory errors.
All data is sampled synthetically on-the-fly, so data points used in one minibatch are never repeated in another minibatch. We use $1000$ diffusion timesteps in all experiments and set the hyperparameters $\beta_1,\ldots,\beta_{1000}$ using a linear interpolation schedule~\citep{hoDenoisingDiffusionProbabilistic2020a} from $\beta_1 = 10^{-4}$ to $\beta_{1000}=0.005$. Finally, we use the Adam optimizer with $\beta_1=0.9$ and $\beta_2=0.999$ \citep{kingmaAdamMethodStochastic2015}, no weight decay and gradient clipping at $1.0$. 
For LMH we relax the discrete delta distributions representing ``hard constraints'' for Sudoku and sorting by observing a Bernoulli variable with a probability of $0.9999$ to provide individual guidance to the sampling process for each constraint. For BCMF, we relax the Dirac distribution on the output matrix $\mE$ to a normal distribution with standard deviation $0.01$. We release code to ensure full reproducibility of our results.

For the training plots in \cref{fig:training_plots,fig:sorting_different_graphical_models,fig:hbmf_results} we compute validation metrics regularly throughout training, in particular every 1000 iterations for BCMF and HBCMF and every 5000 iterations for sorting and Sudoku. We smooth the lines by averaging the metrics for each evaluation within each segment of the logarithmic grid (so e.g. the RMSE shown for sorting at $3\times10^4$ iterations is the average of the RMSE computed at $2.5\times10^4$ and that computed at $3\times10^4$).
Where not otherwise specified, results for Sudoku are computed on examples with $50\%$ of cells observed.

\section{Graphical models}
\label{app:graphical_model_specification}

\begin{figure*}[h]
    \centering
    \begin{subfigure}[t]{0.45\textwidth}
        \centering
        \includegraphics[width=0.8\textwidth]{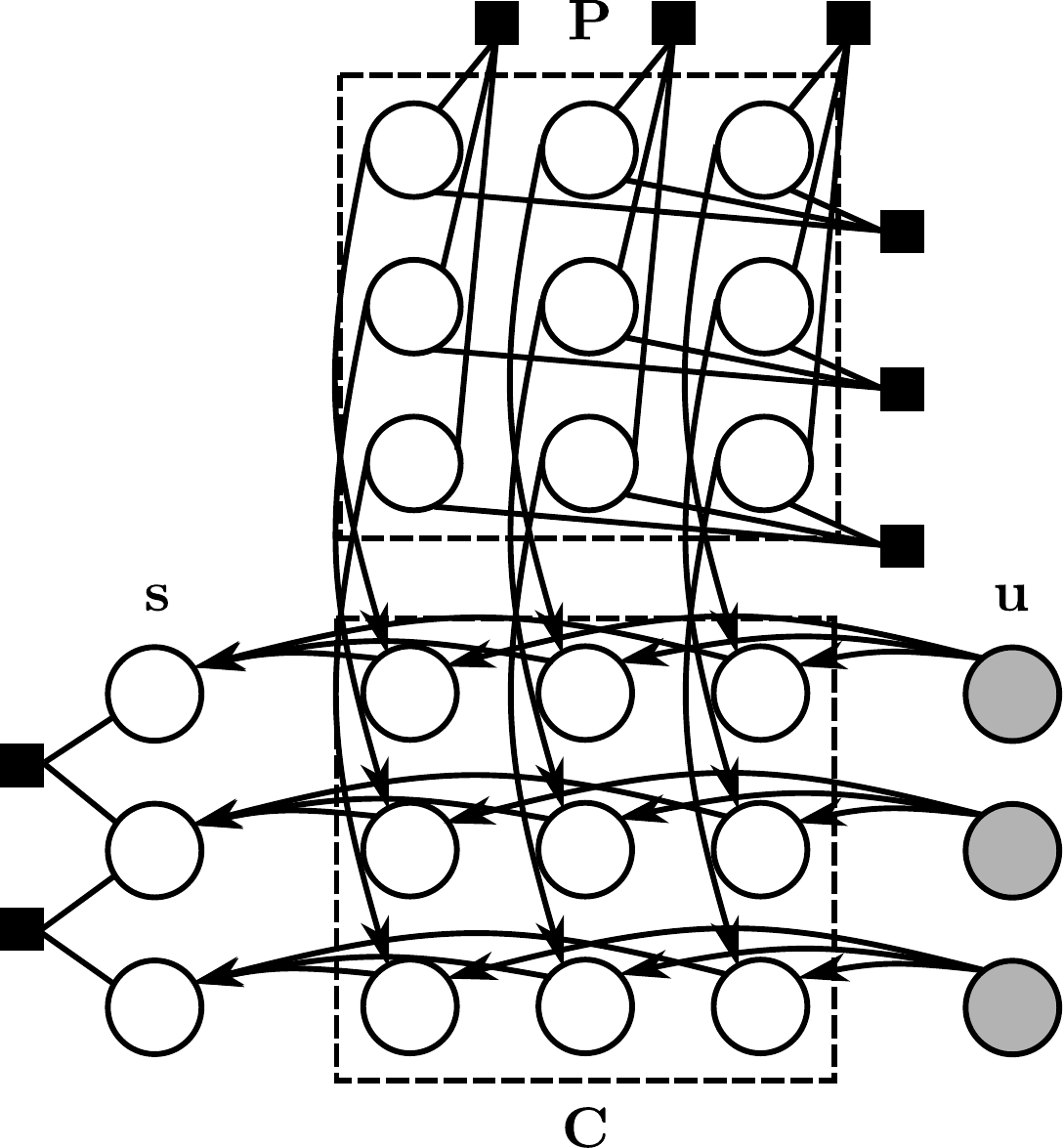}
    \caption{Graphical model for sorting. The unsorted list $\mathbf{u}$ is observed and multiplied with the permutation matrix $\mathbf{P}$ (factors ensure one element is active per row and column) into intermediate variables $\mC$. Summing over $\mC$ yields the sorted nodes $\mathbf{s}$. The sorted nodes have pairwise constraint factors to ensure their ordering.}
\label{fig:sorting_graphical_model}
    \end{subfigure}~
    \begin{subfigure}[t]{0.45\textwidth}
        \centering
        \includegraphics[width=0.7\textwidth]{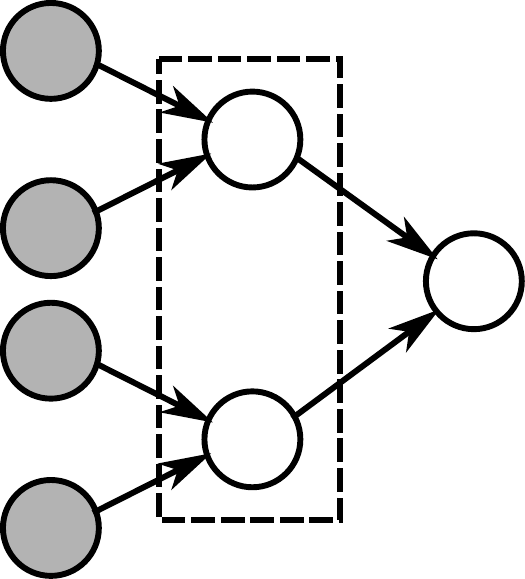}
        \caption{Boolean circuit graphical model with two layers. The nodes in the dashed box are intermediate variables. The input nodes are observed and the output node is to be inferred.}
        \label{fig:boolean-graphical-model}
    \end{subfigure}
\end{figure*}

The graphical model for BCMF is shown in \Cref{fig:bmf_translation}; for Sudoku in \Cref{fig:independence_mask_sudoku}; for sorting in \Cref{fig:sorting_graphical_model}; and for the Boolean circuit in \cref{fig:boolean-graphical-model}.
The Boolean circuit has a relatively simple tree structure. The others have more complex structures; note that even though BCMF may look like a tree in \cref{fig:bmf_translation}, the use of plates mean that it is not as there are multiple paths between e.g. elements in $\mA$ and elements in $\mE$.
The following section shows the resulting attention masks.

\section{Structured Attention}
\label{app:sparse_attention}
\label{app:attention_masks}

\Cref{fig:sparsity-masks,fig:independence_mask_sudoku} show examples of the attention masks used in all experiments, with and without intermediate variables. In \cref{fig:mask_comparison} we compare the imposed Sudoku sparsity mask to the attention weight matrices learned by our Non-sparse baseline. This confirms that, on Sudoku, the inductive bias we impose is appropriate.

\begin{figure*}[h]
\centering
\begin{subfigure}[b]{4.8cm}
    \centering
    \includegraphics[scale=1.8]{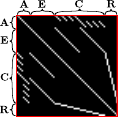}
    \caption{BCMF ($n=4,m=4,k=2$).}
\end{subfigure}
\quad
\begin{subfigure}[b]{4.8cm}
    \centering
    \includegraphics[scale=1.8]{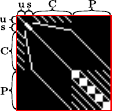}
    \caption{Sorting ($n=5$).}
\end{subfigure}
\quad
\begin{subfigure}[b]{4.8cm}
    \centering
    \includegraphics[scale=1.8]{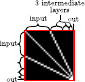}
    \caption{Boolean circuit ($n=4$).}
\end{subfigure}
\begin{subfigure}[b]{4.8cm}
    \centering
    \includegraphics[scale=1.8]{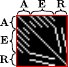}
    \caption{BCMF w/o intermediate.}
\end{subfigure}
\quad
\begin{subfigure}[b]{4.8cm}
    \centering
    \includegraphics[scale=1.8]{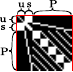}
    \caption{Sorting w/o intermediate.}
\end{subfigure}
\quad
\begin{subfigure}[b]{4.8cm}
    \centering
    \includegraphics[scale=1.8]{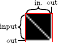}
    \caption{Boolean circuit w/o intermediate}
\end{subfigure}
\caption{Attention masks for BCMF, sorting, and the Boolean circuit. All are shown with (top row) and without (bottom row) intermediate variables. Variable $i$ can attend to variable $j$ iff the cell in row $i$ and column $j$ is white. These masks all become more sparse (as measured by proportion of entries which are non-zero) as the problem dimensionality is increased.}
\label{fig:sparsity-masks}
\end{figure*}

\begin{figure}[h]
    \centering
    \includegraphics[width=0.6\textwidth]{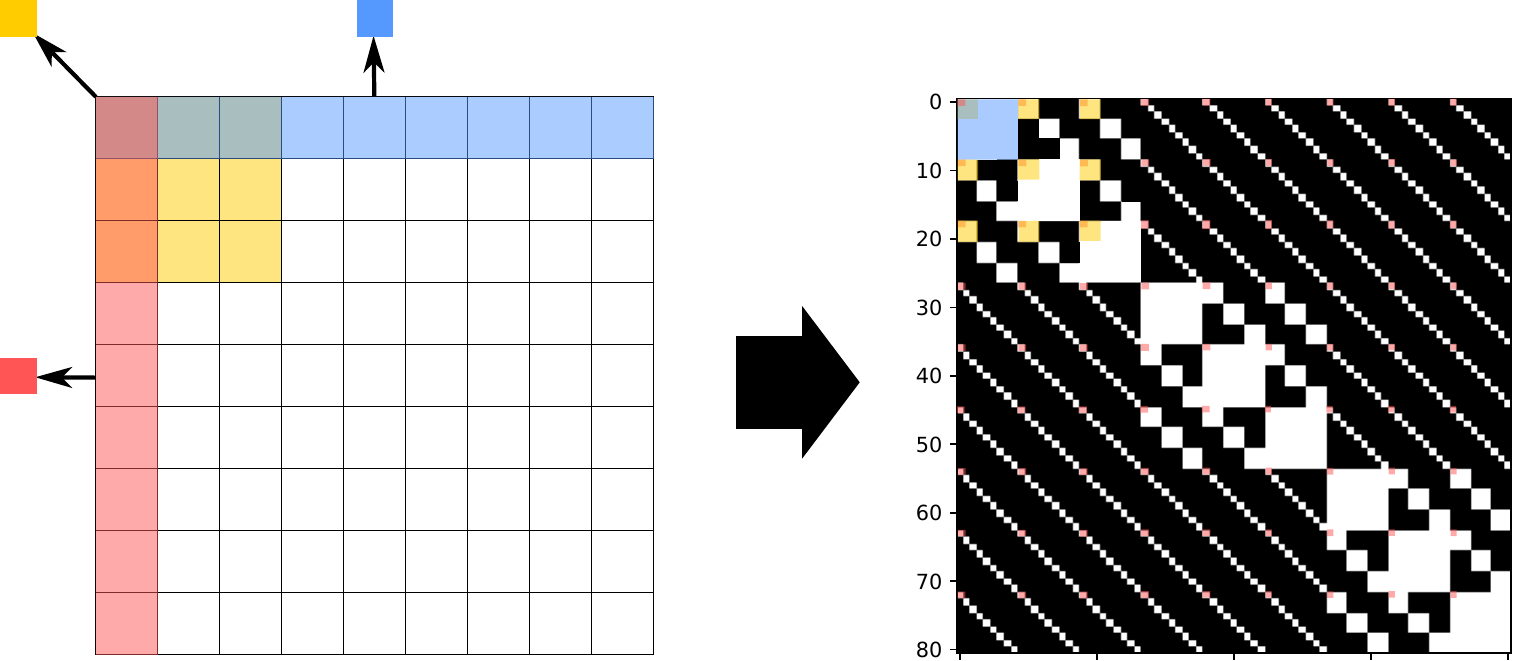}
    \caption{Correspondence between a $9\times 9$ Sudoku grid (left) and the resulting attention mask. We draw a factor for each of the first row, column and block on the left. On the right the respective entries in the attention structured mask $\mM$ are highlighted with the same color.}
    \label{fig:independence_mask_sudoku}
\end{figure}

\begin{figure}[h]
    \centering
    \includegraphics[width=0.8\textwidth]{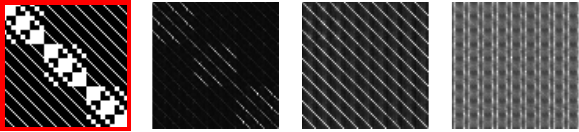}
    \caption{\textbf{Left/red border:} The attention mask imposed by GSDM on Sudoku. \textbf{Remainder:} Three attention masks obtained from different attention layers/heads of an non-sparse diffusion model on Sudoku. Soft attention to row, column and block constraints are visible, indicating that we impose an appropriate inductive bias.}
    \label{fig:mask_comparison}
\end{figure}

The optimal choice for our structured attention would be sparse matrix multiplication on the accelerator, unfortunately this was not yet available at the time of writing of this paper. We therefore provide a packed dense implementation of structured attention. 
Our structured attention mechanism lets us reduce the computational and memory cost of an $n$-dimensional DM from $\gO(n^2)$ to $\Theta(nm)$, where $m$ is the maximum number of ones in any row of our attention mask $\mM$. For our BCMF experiment, the reduction in memory footprint was necessary for us to scale to the dimensions demonstrated while using a single GPU. Recall that, after computing keys $\mK$, values $\mV$, and queries $\mQ$ (all with shape $n \times d$, where $d$ is the embedding dimension), and given a attention mask $\mM$ with shape $n \times n$, we compute
\begin{align}
    \emb &= \softmax\left( \mM \odot \mQ \mK^T \right) \mV.
\end{align}
If implemented naively with dense matrix multiplications, both computing $\mQ \mK^T$ and the outer multiplication by $\mV$ involve $\gO(n^2)$ scalar operations. We attempt to avoid this cost while still taking advantage of the dense matrix multiplications for which GPUs are designed for. To do so, we project $\mK$ and $\mV$ into 3-dimensional matrices $\mKsparse$ and $\mVsparse$ of shape $n \times m \times d$. We perform this projection such that $\mKsparse_{i}$ is a sequence of the key vectors for every variable that variable $i$ is connected to. Equivalently, letting $a_{ij}$ be the index of the $j$th variable that variable $i$ is connected to, $\mKsparse_{i,j}$ is equal to $\mK_{a_{ij}}$. We define $\mVsparse$ similarly for value vectors. If variable $i$ connects to less than $m$ entries, then we pad $\mKsparse_i$ and $\mVsparse_i$ with zeros. We then compute an $n\times m$ array of unnormalised weights $\mU$ (encompassing all interactions allowed by our attention mask) such that $\mU_{i,j} = \mQ_i \cdot \mKsparse_{i,j}$. Doing so involves only $\gO(nm)$ operations, rather than the $\gO(n^2)$ required to compute dense attention weights. We then mask all entries in $\mU$ that were padded by setting them to $-\infty$ before applying the usual $\text{softmax}(\mU)$ row-wise to get $\mWsparse$. Finally, we can compute the output $\evh$ by setting each $\emb_i = \sum_j{\mWsparse_{i,j} \mVsparse_j}$ (again requiring only $\gO(nm)$ operations), which is equal to the output that would be obtained through dense matrix multiplications including the mask $\mM$. Future work may integrate the sparse attention implementation of \citet{kreuzer2021rethinking} that scales as $\gO(e)$.

\section{Neural network architecture}
\label{app:neural-net-architecture}

In the following we denote Swish activations as $\nnSilu$ \citep{hendrycksGaussianErrorLinear2020}, group norm as $\nnNorm$, 2D convolution as $\nnConv$, a linear layer as $\nnLinear$ and neuron dropout as $\nnDropout$. Following the implementation of \citet{songDenoisingDiffusionImplicit2021} our timestep embedding is implemented as the following sequence: $\nnLinear$; $\nnSilu$; $\nnLinear$. Our ResNet also follows \citet{songDenoisingDiffusionImplicit2021} by applying the sequence: $\nnNorm$; $\nnSilu$; $\nnConv$; projection of time and variable embedding; $\nnNorm$; $\nnSilu$; $\nnDropout$; $\nnConv$. Compared to \citet{songDenoisingDiffusionImplicit2021} our convolution here is 1D and not 2D since we do not operate on images.

We repurpose the same architecture for our ``Regressor + GS'' and ``Regressor'' baselines, using the GSDM architecture for ``Regressor + GS'' and the non-sparse variation for ``Regressor''. To do so, we simply replace the inputs $\rvx_t$ and $t$ with appropriately-shaped arrays of zeros for every training iteration as well as during inference. The modified architectures therefore take a sole input $\rvy$ and return a sole output, their prediction of $\rvx$.

\section{Faithfulness of Attention}
Our design choices regarding the construction of attention masks, and the depth of our architecture, are motivated by the following. According to the DM loss in \cref{eq:diffusion-loss}, the neural network $\hat{\rvx}_\theta$ is tasked at each timestep $t=t'$ with predicting $\rvx_0$ from $\rvx_{t'}$. \Cref{fig:graph_dm} shows an example graphical model factorization for $q(\rvx_0|\rvx_{t'})$ corresponding to this task. Since $\rvx_{t'}$ can be generated by adding independent noise to each dimension of $\rvx_0$, this graphical model is derived from the graphical model of the data distribution $q(\rvx_0)$ by simply adding an edge from each variable in $\rvx_0$ to the corresponding variable in $\rvx_{t'}$.

\begin{theorem}[Dependence in diffusion models]
\label{theo:cond_dep_dm}
Given that the data distribution $q(\rvx_0)$ is represented by a connected graphical model $\gG = (\gX, \gA)$, with nodes $\gX$ and edges $\gA$, we can represent the temporally combined graphical model at times $t=0$ and $t=t'$ as $\gG_{DM} = (\{\rvx_{t'}^i \}_i \cup \{ \rvx_0^i \}_i, \{(\rvx_0^i , \rvx_{t'}^i)\}_i \cup \gA)$. Then there are no pairs $i,j$ such that $\rvx_0^i$ can be assumed independent of $\rvx_{t'}^j$ after conditioning on all other dimensions of $\rvx_{t'}$. In other words, for any $i,j$ pair, we have to assume $\rvx_0^i \not\!\perp\!\!\!\perp \rvx_{t'}^j \mid \rvx_{t'}^{-j}$, where $\rvx_{t'}^{-j}$ stands for all nodes in $\rvx_{t'}$ except node $j$.
\end{theorem}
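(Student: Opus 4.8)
The plan is to read the conditional-independence claim through the separation semantics of graphical models: the only conditional independences we are entitled to assume are those forced by the graph, i.e.\ $\rvx_0^i \indep \rvx_{t'}^j \mid \rvx_{t'}^{-j}$ may be assumed exactly when $\rvx_{t'}^{-j}$ separates $\rvx_0^i$ from $\rvx_{t'}^j$ in $\gG_{DM}$ (d-separation when $\gG$ is directed, vertex separation when it is undirected). Hence proving the theorem reduces to showing that, for every pair $i,j$, this separation fails --- equivalently, that there is an active (unblocked) path between $\rvx_0^i$ and $\rvx_{t'}^j$ given the conditioning set $\rvx_{t'}^{-j}$.

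First I would exhibit the path explicitly. Because $\gG$ is connected, there is a simple path inside $\gG$ from $\rvx_0^i$ to $\rvx_0^j$ that uses only data nodes; appending the single edge $(\rvx_0^j,\rvx_{t'}^j)$ of $\gG_{DM}$ yields a path $P$ from $\rvx_0^i$ to $\rvx_{t'}^j$. (When $i=j$ the path $P$ is just that one edge, which already settles the diagonal case.) Since the conditioning set $\rvx_{t'}^{-j}$ consists solely of noisy nodes, no interior vertex of $P$ --- all of which are data nodes $\rvx_0^k$ --- ever lies in it.

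Next I would check that $P$ is active. In the undirected reading this is immediate: $P$ avoids the conditioning set entirely, so it is unblocked and separation fails. In the directed reading I must handle colliders, and here the key observation is that the diffusion construction gives every data node $\rvx_0^k$ its own noisy child $\rvx_{t'}^k$. Therefore any collider $\rvx_0^k$ lying on $P$ has a descendant, namely $\rvx_{t'}^k$; and since such a collider necessarily has $k \neq j$, this descendant sits in $\rvx_{t'}^{-j}$ and opens the collider. Non-collider interior nodes cannot block either, because they are never in the conditioning set. It remains only to note that $\rvx_0^j$ itself is not a collider on $P$: its incident path-edge to the endpoint $\rvx_{t'}^j$ points outward, so at $\rvx_0^j$ the path forms a chain or fork rather than a head-to-head meeting. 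Thus every interior vertex transmits dependence and $P$ is active.

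Concluding, separation fails for all $i,j$, so we are never licensed to drop the edge of influence and must assume $\rvx_0^i \notindep \rvx_{t'}^j \mid \rvx_{t'}^{-j}$. I expect the one genuinely delicate step to be the collider bookkeeping in the directed case; it is resolved cleanly by the fact that adding the noise edges endows every data node with a conditioned-on descendant, the lone unconditioned exception being the target's child $\rvx_{t'}^j$ --- which is exactly why $\rvx_0^j$ can never act as a blocking collider.
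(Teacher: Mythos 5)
Your proof takes the same route as the paper's: exhibit the path from $\rvx_{t'}^j$ through $\rvx_0^j$ to $\rvx_0^i$ guaranteed by connectedness of $\gG$, observe that none of its interior vertices (all data nodes) lie in the conditioning set $\rvx_{t'}^{-j}$, and conclude that separation fails. The one place you go beyond the paper is the collider bookkeeping in the directed case, and this is a genuine improvement rather than padding: the paper's proof jumps from ``the path avoids the conditioning set'' straight to ``therefore no d-separation,'' which is not literally valid for directed graphs, since a collider on an unconditioned path blocks it unless some descendant of the collider is conditioned on. Your observation that every data node $\rvx_0^k$ with $k\neq j$ has the conditioned-on child $\rvx_{t'}^k \in \rvx_{t'}^{-j}$ (opening any collider), together with the check that $\rvx_0^j$ cannot itself be a collider on the path, closes exactly the gap the paper leaves implicit. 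In short: correct, same strategy, and slightly more rigorous than the published argument.
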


\begin{proof}
For any $j$, $\rvx_{t'}^{j}$ is directly connected to $\rvx_{0}^{j}$. Since we assumed that the graphical model for $q(\rvx_0)$ is connected, there will further be a path from $\rvx_{0}^{j}$ to $\rvx_{0}^{i}$ which does not pass through any conditioned on nodes for any $i$. Therefore $\rvx_{t'}^{j}$ cannot be d-separated from $\rvx_{0}^{i}$~\citep{kollerProbabilisticGraphicalModels2009a}.
\end{proof}

\label{app:faithfulness}
\begin{wrapfigure}[13]{r}{0.2\textwidth}
    \includegraphics[width=0.2\columnwidth]{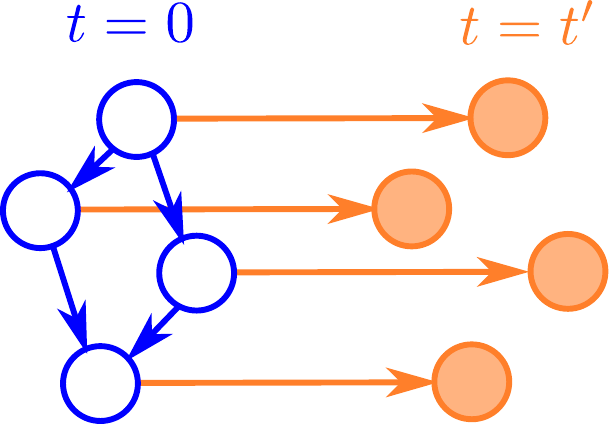}
    \caption{Example graphical model of $q(\rvx_0)q(\rvx_t|\rvx_0)$. Nodes are latent/blue if in $\rvx_0$, observed/orange if in $\rvx_t$.}
    \label{fig:graph_dm}
\end{wrapfigure}
The consequence of \cref{theo:cond_dep_dm} is that every node in the neural network output $\hat{\rvx}_\theta(\rvx_t,\rvy,t)$ should depend on every node in its input $\rvx_t$.  Neural network architectures without this property might not be able to faithfully predict $\rvx_0$ given $\rvx_t$ and so might not faithfully model $q(\rvx_0)$. This consideration motivated our previously-described design choice that variable $i$ can attend to $j$ if there is any edge between them, irrespective of the direction of the edge. Otherwise, if the graphical model is directed and acyclic there will not be a path between every pair of nodes. This would cause GSDM to make false independence assumptions, which we show impacts performance in \cref{fig:sorting_different_graphical_models}. Note that if node $i$ is not directly connected to node $j$ in our attention mask, information about node $i$ may have to be passed to node $j$ via other nodes. Since messages are only passed along one edge per transformer layer, the number of transformer layers should be chosen to be at least as great as the maximum path length in the symmetrized graphical model.

\section{Permutation invariance through shared embeddings} \label{app:permutation-invariance}
In this section we first provide intuition about permutation invariance in diffusion models before restating and proving \cref{theo:perm-invariance-gsdm}. Our architecture provides an opportunity to enforce permutation invariance in the learned distribution. Consider a ``vanilla'' DM which has a similar architecture to ours in \cref{fig:attention_circuit} but without the attention mask. The only information this network receives about the ordering of its input comes from the node embeddings and, if the node embeddings were removed, the architecture would be entirely permutation-equivariant. Similarly, if the node embeddings were shared between a set of nodes, the architecture would be equivariant to permutations of this set of nodes. 
The modeled distribution would therefore be invariant to permutations of this set of nodes~\cite{hoogeboom2022equivariant}. 
This may be a useful permutation invariance to encode for some problems but, for the structured problems considered in this paper, it is too simple and not valid. We introduce the following theorem to get describe other, more practically applicable, types of permutation invariance.

\begin{theorem}[Permutation invariance in GSDM]
Let $\mathcal{A}$ represent the indices of a subset of the dimensions of data $\rvx$ and $\Pi_\mathcal{A}$ be the class of permutations that permute only dimensions indexed by $\mathcal{A}$. Assume we have a GSDM parameterised with neural network $\hat{\rvx}_\theta(\cdot; \mM)$, where $\mM$ is the structured attention mask. If the node embeddings used by $\hat{\rvx}_\theta$ are shared across all nodes indexed by $\mathcal{A}$, then the distribution modelled by GSDM will be invariant to all permutations $\pi$ satisfying
\begin{equation}
    \mM = \pi \mM \quad \text{and} \quad \pi \in \Pi_\mathcal{A}
\end{equation}
where $\pi\mM$ is a permutation of both the rows and columns of $\mM$ by $\pi$.
\end{theorem}
\begin{proof}
Our architecture without sparse attention, i.e. with $\mM=\mathbf{1}$, is equivariant under $\Pi_\mathcal{A}$ in that (writing the attention mask as an additional input)
\begin{equation}
    \hat{\rvx}_\theta(\pi \rvx_t; \mathbf{1}) = \pi \hat{\rvx}_\theta(\rvx_t; \mathbf{1}) \qquad \forall \pi \in \Pi_\mathcal{A}
    \label{eq:equivariant-dm}
\end{equation}
for any $\rvx_t$. The analysis is different when the network uses an attention mask because the attention mask provides additional information about the ordering of the inputs. Replacing $\mM$ by $\pi\mM$, the permutation of both the rows and columns of $\mM$ by $\pi$, prevents this:
\begin{equation}
    \hat{\rvx}_\theta(\pi \rvx_t, \pi \mM) = \pi \hat{\rvx}_\theta(\rvx_t, \mM) \qquad \forall \pi \in \Pi_\mathcal{A}
    \label{eq:non-equivariant-gsdm}
\end{equation}
For equivariance to permutations of $\rvx$ alone, however, we require
\begin{equation}
    \hat{\rvx}_\theta(\pi \rvx_t, \mM) = \pi \hat{\rvx}_\theta(\rvx_t, \mM) \qquad \forall \pi \in \Pi_\mathcal{A}.
    \label{eq:equivariant-gsdm}
\end{equation}
In general, the equality in \cref{eq:non-equivariant-gsdm} will only imply that in \cref{eq:equivariant-gsdm} if $\mM = \pi \mM$. Therefore, when used in combination with a structured attention mask $\mM$, sharing embeddings among all nodes in $\mathcal{A}$ will lead to equivariance only to the set of permutations $\{\pi\in\Pi_\mathcal{A}|\mM=\pi\mM\}$. 
\end{proof}

\section{Generalization with BCMF problem dimension}
\label{app:unconditional_bmf}
\label{app:conditional_bcmf}

\begin{figure*}[h]
    \centering
    \includegraphics[width=1.0\textwidth]{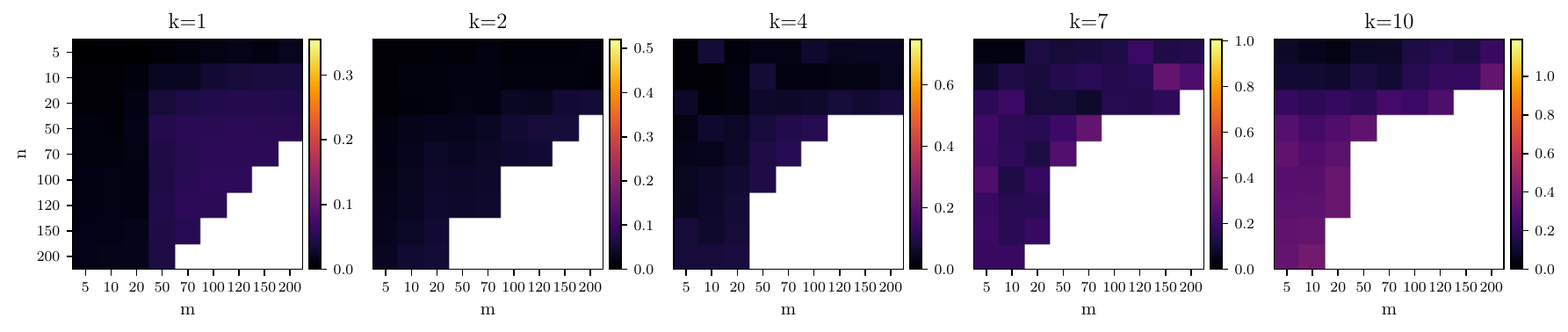}
    \caption{Error vs problem dimension for GSDM on binary continuous matrix factorization. We show the root mean square error between the observed matrix $\mE$ and the product of the sampled $\mA$ and $\mR$. The colorbar for each value of $k$ is scaled so that ``yellow'' corresponds to the error achieved by a baseline which samples $\mA$ and $\mR$ from the prior, ignoring $\mE$. Despite never seeing a value of $n$, $m$, or $k$ larger than $10$ during training, GSDM scales well to much larger values of $m$ and $n$. When they grow large enough, GSDM runs out of GPU memory. We mark entries where this occurred in white.}
    \label{fig:bmf_rank_comparison}
\end{figure*}

\begin{figure}[h]
    \centering
    \includegraphics[width=1.0\textwidth]{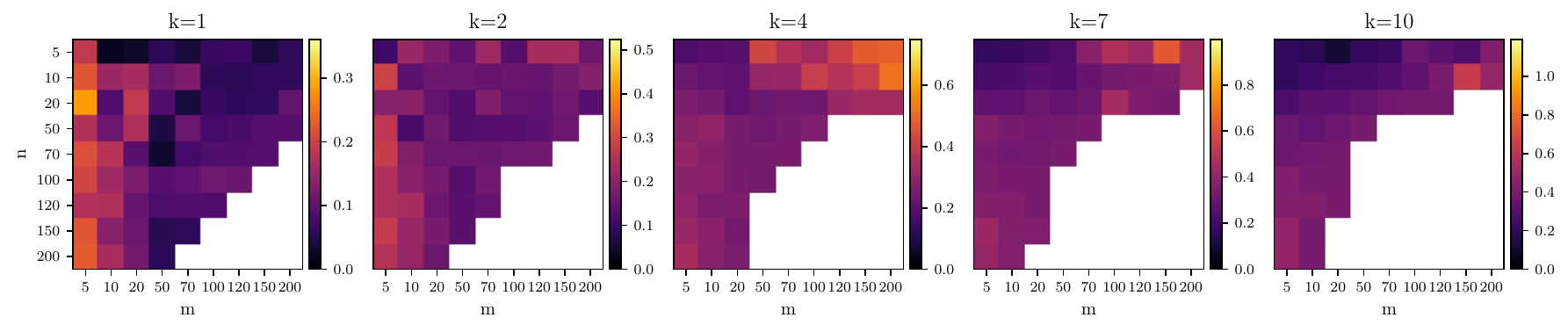}
    \caption{Similar to \Cref{fig:bmf_rank_comparison} but with a GSDM model trained to sampled $\mA$, $\mR$, and $\mE$ jointly instead of conditioning on $\mE$. We plot a heatmap of the root mean squared error (RMSE) between the matrix $\mE$ and the product $\mA\mR$. The ranges for each rank are scaled so that a yellow color represents the expected RMSE if $\mA$, $\mR$, and $\mE$ are all sampled independently from the prior. Entries colored in white exceed the GPU's memory limit.}
    \label{fig:bmf_rank_comparison_unconditioned}
\end{figure}

In \Cref{fig:bmf_rank_comparison_unconditioned} we plot a heatmap similar to \Cref{fig:bmf_rank_comparison} but with a GSDM which generates unconditional joint samples of $\mA$, $\mR$, and $\mE$ instead of samples conditioned on $\mE$. We measure the mismatch between $\mE$ and the product $\mA\mR$ and, interestingly, see that it is greater for the unconditional model (for problem dimensions both inside and outside the training distribution). This suggests that it may be possible to improve unconditional generation performance by adjusting the diffusion process hyperparameters so that $\mE$ is sampled early in the diffusion process and then $\mA$ and $\mR$ are sampled later, conditioned on $\mE$, but we do not attempt to do so here.

\begin{table*}[]
    \centering
    \begin{tabular}{c|c|ccc|c}
         & & Cost of ResNets  &  Cost of attention & Overall cost  &  Reduction \\
         \midrule
         \multirow{2}{*}{BCMF with $k=m=n$} & Naive & $\gO(n^2)$ & $\gO(n^4)$  & $\gO(n^4)$ & - \\
         & GSDM &  $\gO(n^3)$ & $\gO(n^4)$  & $\gO(n^4)$ & $\gO(1)$ \\
         \midrule
         \multirow{2}{*}{Sorting with input size $n$} & Naive & $\gO(n^2)$ & $\gO(n^4)$  & $\gO(n^4)$ & - \\
         & GSDM &  $\gO(n^2)$ & $\gO(n^3)$  & $\gO(n^3)$ & $\gO(n)$ \\
         \midrule
         \multirow{2}{*}{Sudoku with side length $n$} & Naive & $\gO(n^2)$ & $\gO(n^4)$  & $\gO(n^4)$ & - \\
         & GSDM &  $\gO(n^2)$ & $\gO(n^3)$  & $\gO(n^3)$ & $\gO(n)$ \\
         \midrule
         \multirowcell{2}[0pt][l]{Boolean with input size $n$} & Naive & $\gO(n)$ & $\gO(n^2)$  & $\gO(n^2)$ & - \\
         & GSDM & $\gO(n)$ & $\gO(n)$  & $\gO(n)$ & $\gO(n)$  \\
         \midrule
    \end{tabular}
    \caption{Comparison of computational complexities of a GSDM layer and a naive DM layer without structured attention or intermediate variables. GSDM yields reductions in complexity that scale with $n$ for the Boolean and sorting experiments, while giving the same complexity as a naive approach for BCMF yet much better performance.}
    \label{tab:complexities}
\end{table*}

\section{Computational Complexities} \label{ap:complexities}
\Cref{tab:complexities} compares the computational cost of GSDM with that of naively applying a DM without intermediate variables or structured attention.

\section{Data efficiency}
\label{app:data-efficiency}

Throughout this paper, we have focused on the case where infinite training data is available from a fast-to-sample generative model of $\rvx$ and $\rvy$. We now show that GSDM provides further advantages if limited data is available through the inductive biases that we impose on its architecture. In \cref{fig:varying-dataset-size} we demonstrate this by showing results when we train on a finite number of examples. On the left-hand side of each plot, the number of unique data points is equal to the batch size so that all data points are seen in every batch. Our non-sparse baseline does consistently worse than GSDM in this setting, and the GSDM variations with non-exchangeable embeddings have performance in between. GSDM reaches its best performance in each case with less than a thousand data points. The non-sparse baseline needs considerably more to be competitive. Training times and hyperparameters for each variation were as mentioned in \cref{tab:hyperparmeter_table}.

\begin{figure}
    \centering
    \begin{subfigure}[t]{0.2\textwidth}
    \includegraphics[scale=0.7]{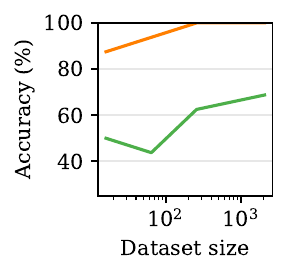}
    \caption{Boolean circuit}
    \end{subfigure}
    \begin{subfigure}[t]{0.2\textwidth}
    \includegraphics[scale=0.7]{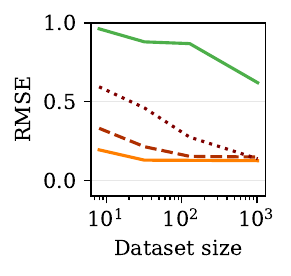}
    \caption{BCMF}
    \end{subfigure}
    \begin{subfigure}[t]{0.2\textwidth}
    \includegraphics[scale=0.7]{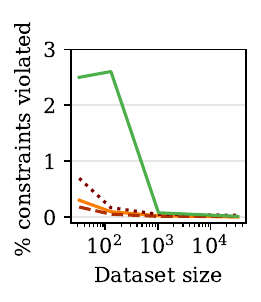}
    \caption{Sudoku}
    \end{subfigure}
    \begin{subfigure}[t]{0.2\textwidth}
    \includegraphics[scale=0.7]{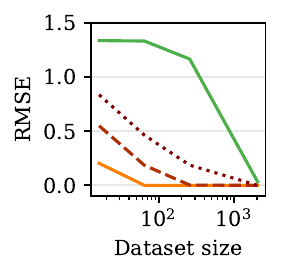}
    \caption{Sorting}
    \end{subfigure}
    \begin{subfigure}[t]{0.15\textwidth}
    \vspace{-3cm}
    \includegraphics[scale=0.7]{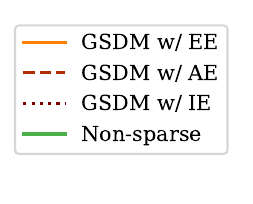}
    \end{subfigure}
    \caption{RMSE for GSDM and baselines trained on varying numbers of unique data points. GSDM needs far fewer data points to than the baselines to fit well.}
    \label{fig:varying-dataset-size}
\end{figure}

\section{Sudoku sample diversity}
\label{app:sudoku_sample_diversity}

\begin{figure}[h]
    \centering
    \includegraphics[width=0.5\columnwidth]{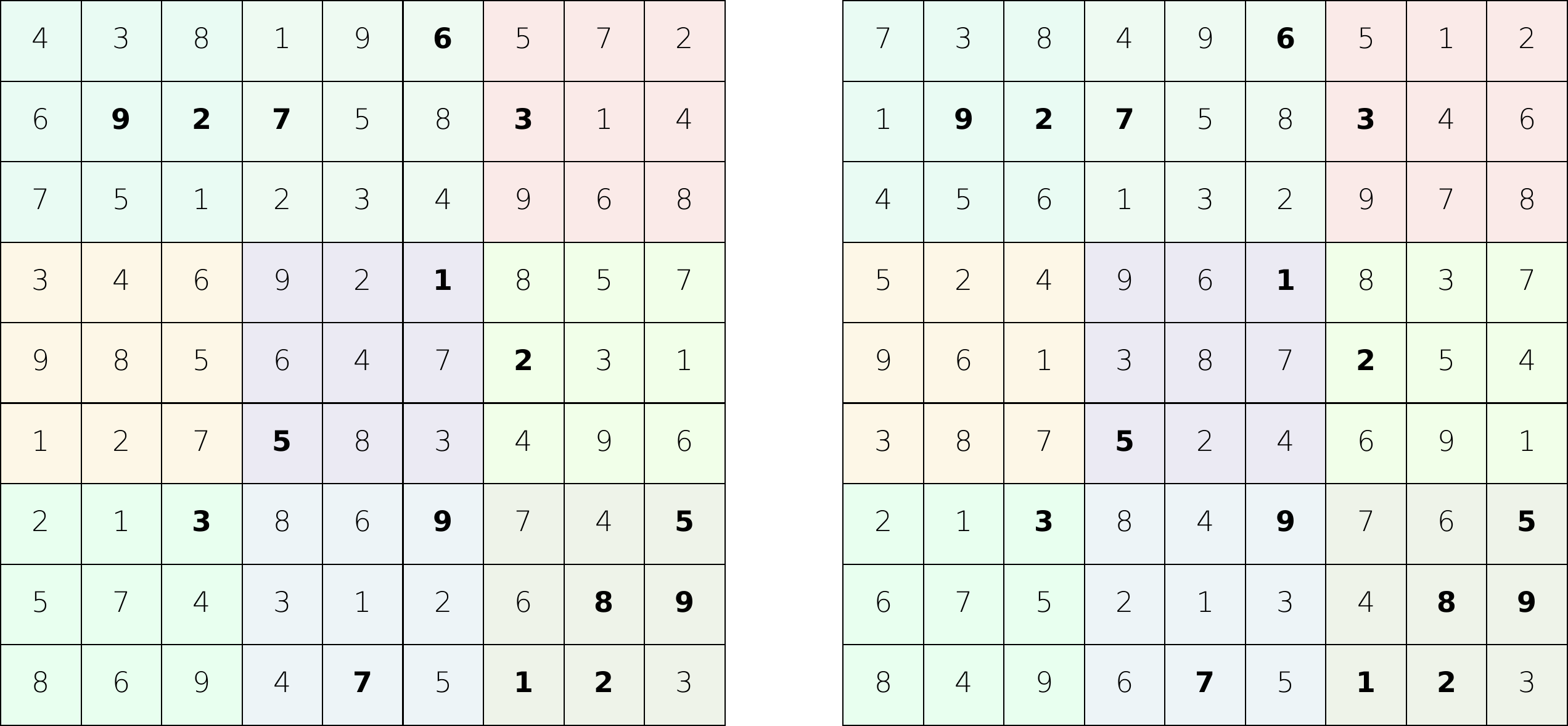}
    \caption{Two Sudoku solutions conditioned on the same 16 observed cells (in bold).}
    \label{fig:sudoku_panel}
\end{figure}

Sample diversity for Sudoku solving is shown in \Cref{fig:sudoku_panel}. GSDM's objective is naturally mass-covering so enforces sample diversity. 

\section{Matrix inversion}
\label{app:matrix_inversion}

\begin{figure}[h]
    \centering
    \includegraphics{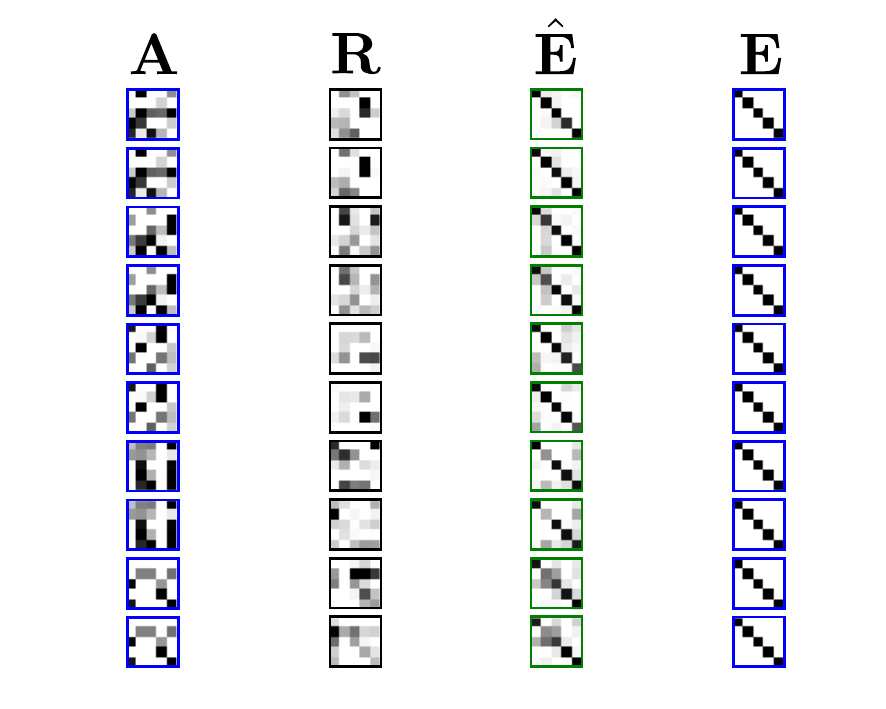}
    \caption{Rows of matrix inversion examples. Similar to \cref{fig:bmf_reconstruction}, but here we condition both on $\mA$ and $\mE=\mathbf{1}$ (blue). Each of the 5 pairs of rows show a solution for the same $\mA$. Reconstructions are shown as $\hat{\mE}=\mA \mR$ (green).}
    \label{fig:matrix_inversion}
\end{figure}

In this experiment we explore a purely continuous variation of our BCMF example from \Cref{sec:experiments}. We train the model on fixed size full rank matrices of dimension $5$ and condition on both $\mE$ and $\mA$ during training and testing. All entries of both $\mA \in \sR^{5 \times 5}$ and $\mR \in \sR^{5 \times 5}$ are now sampled from a $\text{Normal}(0,1)$ prior and we set $\mE = \mA \mR$ (as in BCMF). Despite training on randomly sampled $\mA$ and $\mR$, we demonstrate that GSDM implicitly learns matrix inversion. At test time we set $\mE$ to the identity matrix and solve for $\mR \approx \mA^{-1}$. Example solutions can be seen in \cref{fig:matrix_inversion}. Each pair of rows contains two approximate solutions for the same $\mA$ to illustrate sample diversity. Most reconstructions for $\hat{\mE}$ are close to the identity matrix, but GSDM is not perfect. We did not specialize our prior from BCMF; a more targeted prior could be constructed by directly providing pairs of matrices and their inverse. This experiment shows that we are able to calculate approximate inverses, even though we have not specialized our graphical model or training distribution to do so.

\section{Automatic Compilation of BCMF}
\label{app:bmf_compilation}

\begin{wrapfigure}[10]{r}{0.25\textwidth}
    \centering
    \vspace{-.5cm}
    \includegraphics[width=0.15\textwidth]{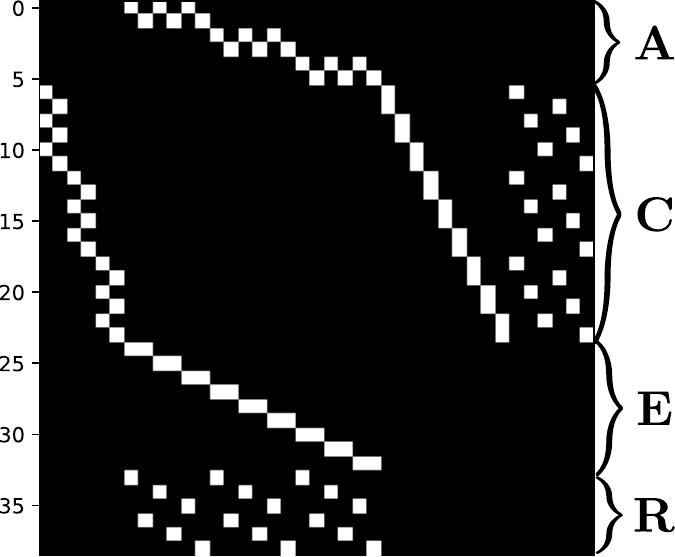}
    \caption{Connectivity mask extracted from BCMF source code. This is the same structure as in \Cref{fig:bmf_translation} but with permuted indices and before the addition of the diagonal self-edges.}
    \label{fig:bmf_compiled_mask}
\end{wrapfigure}
Building on the probabilistic programming language defined in~\citet{vandemeentIntroductionProbabilisticProgramming2018}, we demonstrate a compiler which maps from programs into a corresponding graphical model structure. We demonstrate it on the program on page \pageref{fig:bcmf_source}, which multiplies two random matrices $\mA \in \mathbb{R}^{3 \times 2}, \mR \in \mathbb{R}^{2 \times 3}$ similarly to our BCMF experiment. Samples from Dirac distributions are used to introduce the intermediate nodes of $\mC$ and the terminal nodes of $\mE$. Our compiler first translates it into a graphical model and then into the attention mask as shown in \Cref{fig:bmf_compiled_mask}. We envisage a future extension which ``compiles'' directly from such source code to a trained GSDM network.

\section{BCMF baselines}
\label{app:bcmf-baselines}
Here we provide full detail on the ``hand-coded algorithm'' baselines which we compare against on BCMF. They are as follows

\paragraph{K-means} We treat the $m \times n$ matrix $\mE$ as a dataset containing $n$ data points, each made up of $m$-dimensions.  We then use an off-the-shelf implementation of K-means clustering with number of clusters $K$ set to to the rank $k$. This returns an $m \times k$ matrix of cluster centers, which we use as $\mA$. It also returns the cluster indices in $\{1,\ldots,k\}$ associated with each of the $n$ data points. We convert the indices to one-hot vectors and stack them to give a $k \times n$ binary matrix, which we return as $\mR$.  Doing a factorization of $\mE$ into $\mA$ and $\mR$ in this manner and then reconstructing $\hat{E}:=\mA\mR$ can be understood as snapping each row of $\mE$ to one of the $k$ ``mean'' rows in $\mR$. Therefore, when $k$ is equal to $n$, this approach yields zero error, as is visible in \cref{fig:scaling-results}. The error quickly increases as $n$ grows larger than $k$.

\paragraph{NMF} We perform non-negative matrix factorization of $\mE \in \mathbb{R}^{m \times n}$ into $\tilde{\mA} \in \mathbb{R}^{m \times k}$ and $\tilde{\mR} \in \mathbb{R}^{k \times n}$  with coordinate descent, as implemented in Scikit-learn~\citet{pedregosa2011scikit}. We then simply round each element in $\mR$ to be in $\{0, 1\}$.

\paragraph{ChatGPT}
We show the prompt and produced source code in \cref{fig:chatgpt-baseline}. The algorithm it produces finds the $k$ largest rows in $\mE$ and copies them to $\mA$, adding an index in $\mR$ so that they are reconstructed if we compute $\hat{E}:=\mA\mR$. The reconstruction is thus perfect for the largest $k$ rows and zero for all other rows. This means that the baseline achieves zero error when $k = n$ and relatively large error otherwise. We also see in \cref{fig:scaling-results} that the error spikes for $k$ larger than $n$ and $m$ (e.g. for $n=m=16$ and $k=20$) as this edge case is unhandled.

\newpage
\begin{figure}
\begin{lstlisting}
(defn rand-matrix [size name]
  (foreach (first size) [i (range (first size))]
           (foreach (second size) [j (range (second size))]
                    (sample name (normal 0 1)))))

(defn dot-helper [t state a b]
  (+ state
     (sample "C" (dirac (* (get a t)
                           (get b t))))))

(defn dot [a b]
  (loop (count a) 0 dot-helper a b))

(defn row-mul [t state m v]
  (conj state (sample "E" (dirac (dot (get m t) v)))))

(defn transpose [m]
  (foreach (count (first m)) [j (range (count (first m)))]
           (foreach (count m) [i (range (count m))]
                    (get (get m i) j))))

(defn matmatmul [m1 m2]
  (let [m2_ (transpose m2)]
    (foreach (count m1) [i (range (count m1))]
             (foreach (count m2_) [j (range (count m2_))]
                      (sample
                       "E"
                       (dirac (dot (get m1 i) (get m2_ j))))))))

(let [A (rand-matrix [3 2] "A")
      R (rand-matrix [2 3] "R")
      E (matmatmul A R)]
  E)
\end{lstlisting}
\caption{Source code of a full generative model for the BCMF experiment. Passing this into our compiler yields the attention mask in \cref{fig:bmf_compiled_mask}. Note that intermediate variables for $\mC$ are explicitly created by sampling from a dirac distribution.}
\label{fig:bcmf_source}
\end{figure}

\newpage

\begin{figure}

\textbf{Prompt:} Write a Python function which takes as input E, an m x n non-negative matrix, and factorises it into a mxk binary matrix and a kxn continuous component with values in [0, 1]
\\

\textbf{Returned code:}
{\tiny
\begin{lstlisting}
import numpy as np

def factorize(E, k):
    """
    Factorize an m x n non-negative matrix E into a mxk binary matrix and a kxn continuous component with values in [0, 1].
    
    Args:
        E (numpy.ndarray): The non-negative matrix to be factorized.
        k (int): The number of columns in the binary matrix and rows in the continuous component.
    
    Returns:
        A tuple containing the binary matrix and continuous component, respectively.
    """
    
    # Initialize the binary and continuous matrices
    B = np.zeros((E.shape[0], k), dtype=np.int)
    C = np.zeros((k, E.shape[1]), dtype=np.float)
    
    # Perform the factorization
    for i in range(k):
        # Find the row in E with the largest sum that hasn't been assigned to a column in B yet
        max_sum = -1
        max_row = -1
        for j in range(E.shape[0]):
            if np.sum(B[j,:]) == 0 and np.sum(E[j,:]) > max_sum:
                max_sum = np.sum(E[j,:])
                max_row = j
        
        # Assign the row to the ith column of B
        B[max_row, i] = 1
        
        # Compute the ith row of C as a weighted average of the rows in E selected by B
        selected_rows = B[:,i] == 1
        C[i,:] = np.sum(E[selected_rows,:], axis=0) / np.sum(selected_rows)
    
    return B, C
\end{lstlisting}
}
\caption{Binary-continuous matrix factorization code produced by ChatGPT~\citep{chatgpt} We used the prompt at the top and show the code produced, ignoring additional description and examples that it gave.}
\label{fig:chatgpt-baseline}
\end{figure}

\section{Hierarchical BCMF}
\label{sec:hierarchical-bcmf}

In this section we present an extension of the binary-continuous matrix factorization model in which the generative model samples factors from a hierarchical prior. As shown in \cref{fig:hbmf_graphical_model}, our generative model samples a rank variable and then sample the factors and conditioned on having exactly this many non-zero rows. We report the generative model, the associated sparsity mask, our results, and example factorizations. The differing sizes of and of the factorizations reflect the different ranks that are sampled for each. We show results in \cref{fig:hbmf_results,fig:hbmf_samples}. GSDM consistently selects an appropriate rank. We find that, as in our non-hierarchical matrix factorization experiment, GSDM is the best-performing method and indeed the only neural method to outperform our hand-crafted approximate solutions.

\begin{figure}[h]
    \centering
    \includegraphics[width=0.15\textwidth]{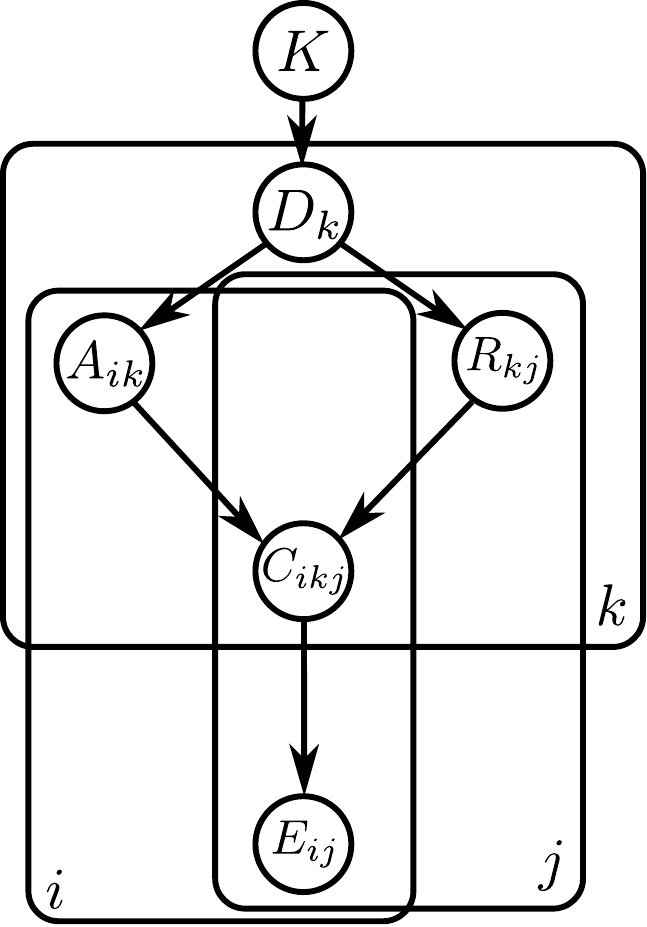}
    \includegraphics[width=0.24\textwidth]{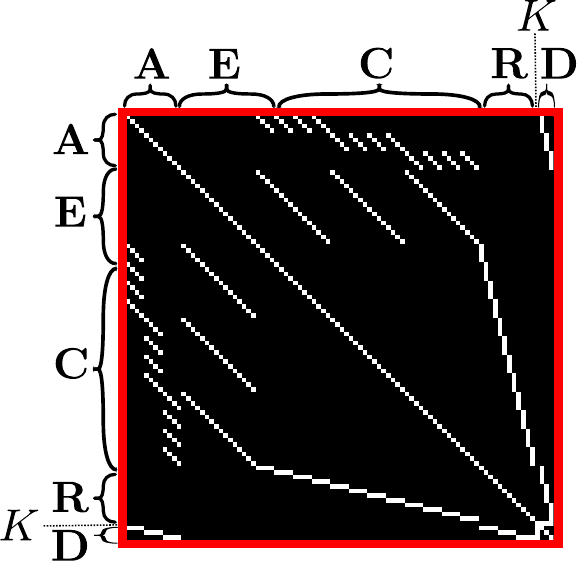}
    \caption{\textbf{Left:} A graphical model for hierarchical binary-continuous matrix factorization (HBCMF). The rank $K$ is first sampled from a uniform distribution. Given this, $\mathbf{D}_k$ is set to 1 for $k \leq K$ and 0 otherwise. Each $k$th row of $\mathbf{R}$ is then sampled conditioned on being zero iff $\mathbf{D}_k$. The same is done for each $k$th column of $\mathbf{A}$.  \textbf{Right:} The sparsity mask for GSDM derived from the HBCMF graphical model. It is similar to that for BCMF but with added rows and columns for $K$ and $\mathbf{D}$. The single dimension of $K$ is indicated with a dashed line. In this model, the upper limit on $K$ is three so $\mathbf{D}$ has three dimensions. $\mathbf{D}$ interacts with both $\mathbf{A}$ and $\mathbf{R}$ following the graphical model. }
    \label{fig:hbmf_graphical_model}
\end{figure}

\begin{figure}[h]
    \centering
    \vspace{-.5cm}
    \includegraphics[width=0.6\textwidth]{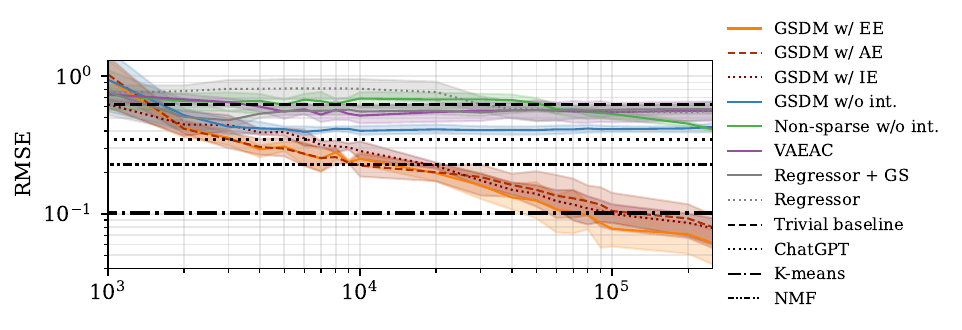}
    \caption{Experimental results on HBCMF. As in our non-hierarchical matrix factorization experiment, GSDM is the best-performing method and indeed the only neural method to outperform our hand-crafted approximate solutions.}
    \label{fig:hbmf_results}
\end{figure}

\begin{figure}[h]
    \centering
    \vspace{-.5cm}
    \includegraphics[width=0.7\textwidth]{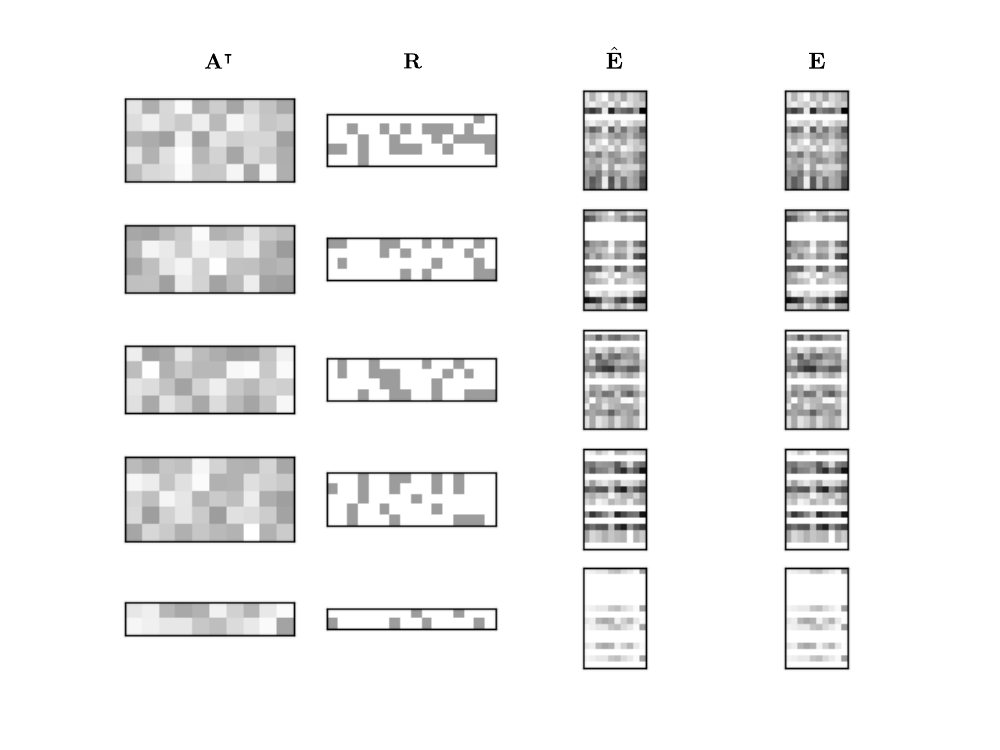}
    \vspace{-.5cm}
    \caption{Example HBCMF factorizations of five matrices by GSDM, one per row. The differing sizes of $\mathbf{A}$ and $\mathbf{R}$ in this figure reflect the different ranks that are sampled for each. GSDM consistently selects an appropriate rank.}
    \vspace{-.5cm}
    \label{fig:hbmf_samples}
\end{figure}

\end{document}